\DeclarePairedDelimiter\floor{\lfloor}{\rfloor}
\newcommand*{\rom}[1]{\expandafter\@slowromancap\romannumeral #1@}
\definecolor{dukeblue}{rgb}{0.0, 0.0, 0.61}
\newtheorem{theorem}{Theorem}[section]
\newtheorem{proposition}[theorem]{Proposition}
\begin{document}
	\title{Efficient algorithms for electric vehicles' min-max routing problem}
	
	\author{Seyed Sajjad Fazeli, Saravanan Venkatachalam, Jonathon M. Smereka
		\thanks{
		S.S. Fazeli and S. Venkatachalam are members of the Department of Industrial and Systems Engineering at Wayne State University, Detroit, MI. J. M. Smereka is a researcher within the Ground Vehicle Robotics (GVR) team at the U.S. Army CCDC Ground Vehicle Systems Center (GVSC) in Warren, MI, 
		(e-mail: sajjad.fazeli@wayne.edu; saravanan.v@wayne.edu; jonathon.m.smereka.civ@mail.mil),
		Corresponding author: S.Venkatachalam. Distribution A. Approved for public release; distribution is unlimited. OPSEC \# 4492.}}
	\vspace{-15mm}
	\maketitle
	\begin{abstract}
		An increase in greenhouse gases emission from the transportation sector has led  companies and the government to elevate and support the production of electric vehicles (EV). With recent developments in urbanization and e-commerce, transportation companies are replacing their conventional fleet with EVs to strengthen the efforts for sustainable and environment-friendly operations. However, deploying a fleet of  EVs asks for efficient routing and recharging strategies to alleviate their limited range and mitigate the battery degradation rate.
		In this work, a fleet of electric vehicles is considered for transportation and logistic capabilities with limited battery capacity and scarce charging station availability. We introduce a min-max electric vehicle routing problem (MEVRP) where the maximum distance traveled by any EV is minimized while considering charging stations for recharging. We propose an efficient branch and cut framework and a  three-phase hybrid heuristic algorithm that can efficiently solve a variety of instances. Extensive computational results and sensitivity analyses are performed to corroborate the efficiency of the proposed approach, both quantitatively and qualitatively.
	\end{abstract}
	\begin{IEEEkeywords}
		Electric Vehicles, Routing, Charging Station, Hybrid Heuristic, Variable Neighborhood Search
	\end{IEEEkeywords}
	\IEEEpeerreviewmaketitle
	\vspace{-4mm}
	\section{Introduction}
	Global warming has been primarily linked to human activities which release greenhouse gases \cite{nazaripouya2019electric}. Among those activities, the transportation sector causes the largest share (about 28\%) of greenhouse gas emissions, which mainly originate from fossil fuel burner vehicles \cite{solaymani2019co2}. In an effort to offset carbon emissions from fossil fuel burning vehicles, priority to transform the transportation systems by driving new technological innovations in vehicles \cite{wu2019role} is causing electric vehicles (EVs) to become rapidly important for many automotive companies. Many countries have offered incentives to accelerate the adoption of EVs to increase the EV share in future vehicle fleets \cite{yi2018energy}.
	\par 
	
	It is important for EVs to choose energy-efficient routes and find the best locations for recharging during their itineraries. Transportation network companies (e.g. Lyft and Uber), and logistic companies have (e.g., FedEx and UPS) already started to operate a fleet of EVs in their business for last mile deliveries \cite{zhang2019joint}. Adopting EVs also brings new challenges. One of the main operational challenges for EVs in transport applications is their limited range and the availability of charging stations (CS) \cite{sundar2016exact,schneider2014electric,hiermann2016electric}. It is estimated that half of the US population lives in areas with fewer than 90 charging infrastructures per million people \cite{slowik2018continued,fazeli2020two}. To successfully employ EVs, we need strategies that can alleviate the range and recharging limitations. 
	
	The electric vehicle routing problem with limited range and number of CSs presented in this work, falls into the category of green vehicle routing problem (GVRP). The GVRP embraces a broad and extensive class of problems considering environmental issues as well as finding the best possible routes for vehicles. The GVRP research can be broadly divided into two categories: 1) minimize the fuel consumption while considering loading weights \cite{kocc2016green}; and 2) replace the conventional vehicles with alternative fuel vehicles (AFV) \cite{schneider2014electric,sundar2013algorithms,desaulniers2016exact,vincent2017simulated,juan2014routing}. This research focuses on AFVs, hence we briefly review the related literature on routing strategies for AFVs. An initial work was done by \cite{erdougan2012green}, where the authors developed a mixed integer programming (MIP) formulation and a genetic algorithm (GA) to overcome the range limitation of AFVs and shortage of refueling locations. Authors in \cite{schneider2014electric} introduced the electric vehicle routing problem (EVRP) with time windows and charging stations with the limited freight capacity for the vehicles. They developed a hybrid meta-heuristic by integrating variable neighborhood and Tabu search. For single unmanned aerial vehicles' routing problem, authors in \cite{sundar2013algorithms}  proposed a novel approach based on an approximation algorithm combined with a heuristic method. Later, the authors extended their work to multiple vehicles by applying the Voronoi algorithm as the construction phase, and 2-opt, 3-opt variable neighborhood searches in the improvement phase \cite{levy2014heuristics}.
	\par
	
	In this work, the MEVRP is defined as follows: given a set of EVs which are initially stationed at a depot, a set of targets and a set of CSs, the goal is to visit each target exactly once by any EV and return to the depot while no EV runs out of charge while they travel their respective routes. It is assumed that all EVs will be fully charged at any visited CS, and the fuel consumption rate is linearly proportioned with traveled distance. The objective of this problem is to minimize the maximum distance traveled by any EV instead of the total distance, which is conventional in VRP. The MEVRP is fundamentally different from the EVRP. An optimal solution in MEVRP assigns routes to all EVs such that none of the EVs has a longer route. This results in a balanced distribution of loads and fair and equitable utilization of the EVs, which can decrease the rates of battery degradation in EVs \cite{lunz2012influence}. Refer to Fig. \ref{intro} for an illustration for MEVRP and EVRP routes for EVs. In Fig. \ref{intro}, EV1 visits most of the targets and travels a lot of distance compared to EV2 with min-sum. However, using  MEVRP, with a nominal  increase in the overall distance, the two EVs travel almost the same amount of distance. In addition, the number of recharging of EVs in more evenly distributed in the min-max comparing to min-sum.
	\begin{figure}
		\centering
		\includegraphics[scale=0.25]{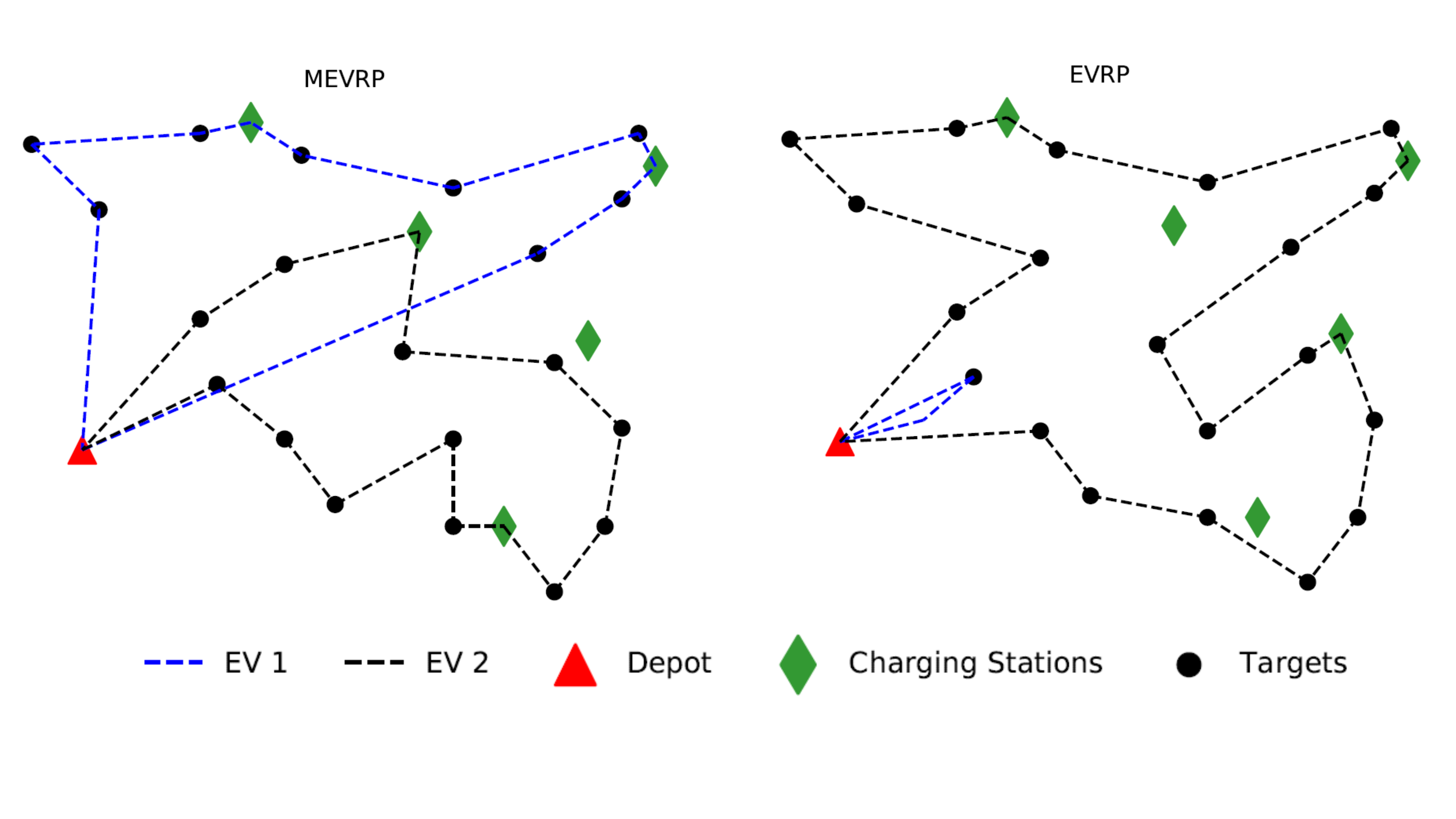}
		\vspace{-2mm}
		\caption {A feasible tour for two EVs visiting all the targets while visiting some charging stations for recharging: MEVRP (left) vs EVRP (right).}
		\label{intro}
	\end{figure}
	
	The MEVRP is also of interest when the time to visit every target from the base depot is more important than the total traveled distance in applications like surveillance, emergency and disaster management \cite{campbell2008routing}, intelligence, and reconnaissance \cite{torabbeigi2020drone,zaloga2011unmanned,manyam2016path}, and multi-robot path planning problems. In energy-efficient multi-robot path planning, the goal is to obtain optimal paths for each robot while avoiding obstacles in the presence of recharging points \cite{kapoutsis2017darp}. Additionally, the min-max provides a fair and equitable utilization for the resources, and the maximum wait time for any target will be less compared to a solution from min-sum. This is particularly vital if the EVs are used to transport people, and the targets are considered as stops in the EVs' routes. There are quite a few studies in the min-max VRP which differ by solution methodologies. These problems are often solved by heuristic methods with multiple phases for constructing the initial solution(s), and subsequently, improving them. Methodologies differ based on the construction of initial solution and the number of base depots. The author in \cite{yakici2017heuristic} considered a single depot min-max vehicle routing problem (SDVRP) where they used an ant colony system as well as a random approach to assign targets to the vehicles. A 3-opt method is used to improve solutions. The work in \cite{sze2017cumulative} considered a capacitated SDVRP where they generated initial solutions by a parallel greedy insertion method and improved them by an adaptive variable neighborhood search (VNS). In multi-depot min-max VRP, usually, the authors use partitioning techniques to transfer the min-max MDVRP to a set of SDVRPs, and solve each SDVRP separately \cite{narasimha2013ant,wang2016min,carlsson2009solving}.\par
	To the best of our knowledge, this study is the first attempt to formulate and solve the min-max version of MEVRP where range limitation is defined for each EV with a set of charging stations. We propose an efficient MIP formulation to solve small-scale instances. For large-scale instances, we develop a hybrid heuristic algorithm (HHA) where we obtain initial solution using an integer programming model and a heuristic, and subsequently, Variable Neighborhood Search (VNS) and genetic algorithm are used to improve the solutions along with novel feasibility methods. Extensive computational experiments evaluate all the proposed approaches.\par
	
	The contributions of this study include the following: 1) an efficient MIP formulation for the MEVRP; 2) an HHA with an embedded feasibility method for large-scale instances and extensive computational experiments to quantify the efficacy of the proposed approach; 3) computational experiments for MIP formulation using branch and cut algorithm; 4) a sensitivity analysis to investigate the aspects of solutions from EVRP and MEVRP.\par
	
	The remainder of this paper is organized as follows: Section \ref{formula} provides a mathematical formulation of the problem along with a subsequent reformulation. Section \ref{Method} introduces the solution methodologies where we present exact and heuristic methods to solve small and large-scale instances, respectively. Section \ref{ER} presents extensive computational experiments and sensitivity analysis. Finally, Section \ref{Con} provides concluding remarks.
	\vspace{-6mm}
	\section{Model Formulation} \label{formula}
	\subsection{Problem Definition}
	We define $T$ as a set of targets, and $\bar{D}$  as a set of CSs. Define $D =\bar{D} \cup d_0$, be a set of CSs, including a depot $d_0$ where $m$ EVs are initially stationed, and each EV is charged to its battery capacity. The MEVRP is defined on a directed graph with a set of vertices $V$ and a set of edges $E$ as $G = (V,E)$ where $V=T\cap D$. We assume that the graph $G$ does not contain any self-loop. Each edge $(i,j)\in E$ is associated with a non-negative cost $c_{ij}$ between vertices $i$ and $j$. It is assumed to be directly proportional to the energy consumption $c_{i,j} = K\cdot f_{ij}^{m}$, where $f_{ij}^{m}$ is the amount of energy consumption by traveling from $i$ to $j$ and $K$ is a constant denoting the energy consumption rate of EV $m$. It is also assumed that both the distances and the charge costs satisfy the triangle inequality, e.g., $\forall i,j,k \in V,$ $f_{ij}^{m}+f_{jk}^{m}\geq f_{ik}^{m}$. Also, let $F_{m}$ denote the maximum charging capacity of any EV $m$. The objective of the model is to find a route for each EV starting and ending at the base depot such that:  each target is visited at least once by an EV; no EV runs out of charge during the trip; and maximum distance traveled by an EV is minimized. The objective functions of MEVRP is represented as follows: \text{Min \big($\underset{\forall m \in M}{\text{Max}}$ } $\sum_{(i,j) \in E}c_{ij}x_{ij}^{m}\big)$, and represented as a linear function in the following mathematical model.
	\vspace{-5mm}
	\subsection{Notation}
	\begin{itemize}
		\item Sets
		\begin{itemize}
			\item $T$: Set of targets, indexed as $t \in T$.
			\item $\bar{D}$: Set of charging stations, indexed as $d \in \bar{D}$.
			\item $D$: Set of charging stations and base depot, $D= \bar{D} \cup \{d_0 \}$.
			\item $V$: Set of all vertices in the graph, including all targets, CSs and base depot, $V = T \cup D$.
			\item $E$: Set of all edges connecting any two vertices without any self-loop,  $(i,j) \in E$ and $i,j \in V$. 
			\item $S$ : Subset of targets and a depot in $V$, $S \subset V$, $\sigma^{+}(S)=\{(i,j) \in E : i \in S, j \not\in S\}$. 
			\item $M$: Set of EVs which are initially stationed at base depot $d_0$, indexed by $m \in M$.
		\end{itemize} 
		\item Model parameters
		\begin{itemize}
			\item $c_{ij}$: Cost of traversing an edge $(i,j) \in E$.
			\item $f_{ij}^{m}$: Amount of energy consumption of EV $m$ by traveling from node $i$ to $j$ with $i,j \in V$.
			\item $F_m$: Battery capacity of EV $m$.
			\item $q$: A large constant, set as number of targets.
		\end{itemize}
		\item  Decision variables
		\begin{itemize}
			\item $x_{ij}^{m}$: 1 if the edge $(i, j)$ is traversed by an EV $m$, and 0 otherwise;
			\item $z_{ij}^{m}$: Flow variable associated with each edge $(i, j) \in E$ indicating the amount of distance traveled by any EV.
			\item $y_{d}^{m}$: 1 if the CS $d \in D$ is visited by any EV $m$, and 0 otherwise.
			\item $w$: Maximum traveled distance by any EV.
		\end{itemize}
	\end{itemize}
	\vspace{-4mm}
	\subsection{MEVRP Model}\label{model}
	\vspace{-3mm}
	\begin{alignat}{3}
	& \text{Min} \hspace{1mm} w  && \label{obj} \\ 
	& \text{s.t. } && \nonumber \\
	& w \geq \sum_{(i,j) \in E}c_{ij}x_{ij}^{m}  \hspace{0.5cm} \forall m \in M,   && \label{Cons1} \\  
	& \sum_{i \in V}{x_{di}^{m}}= \sum_{i \in V} {x_{id}^{m}} \hspace{1.6cm} \forall   d \in \bar{D},m \in M,  && \label{Cons2} \\  
	& \sum_{i \in V}x_{di}^{m} \leq q. y_{d}^{m} \hspace{1.65cm} \forall   d \in \bar{D},m \in M,  &&   \label{Cons3} \\ 
	& \sum_{i \in V}x_{id_{0}}^{m} =1,\sum_{i \in V}x_{d_{0}i}^{m} = 1 \hspace{0.65cm} \forall m \in M,&&   \label{Cons4} \\
	& \sum_{i \in V}\sum_{m \in M}x_{ij}^{m} = 1,\sum_{i \in V}\sum_{m \in M}x_{ji}^{m} = 1  \hspace{0.35cm} \forall   j \in T,&&   \label{Cons5,6} \\
	&x^{m}\big(\sigma^{+}(S)\big)\geq y_{d}^{m} \nonumber\\
	& \hspace{0.5cm} \forall   d \in S \cap \bar{D}, S \subset V \setminus \{d_{0}\}:S \cap \bar{D}\neq \emptyset, m \in M, &&  \label{Cons7} \\
	&\sum_{j \in V}z_{ij}^{m}-\sum_{j \in V}z_{ji}^{m}=\sum_{j \in V}f_{ij}^{m}x_{ij}^{m} \hspace{0.6cm} \forall   i \in T , m \in M, &&  \label{Cons8} \\
	&z_{ij}^{m}\leq F_{m}x_{ij}^{m} \hspace{1.65cm} \forall   (i,j)\in E , m \in M, &&  \label{Cons9} \\
	&z_{di}^{m}= f_{di}^{m}x_{di}^{m} \hspace{1.65cm} \forall   i \in T, d\in D , m \in M, &&  \label{Cons10} \\
	& x_{ij}^{m}\in \{0,1\},z_{ij}^{m}\geq 0  \hspace{0.2cm} \forall (i,j) \in E, m \in M, &&\label{Cons11}\\
	& y_{d}^{m}\in \{0,1\}  \hspace{0.2cm}  \forall d \in \bar{D}, m \in M.&&\label{Cons12}
	\end{alignat}
	 The objective function \eqref{obj} minimizes the maximum distance traveled by any EV. Constraints \eqref{Cons1} represents the upper-bound for the traveled distance of any EV $m$ using the continuous variable $w$. Constraints \eqref{Cons2} ensure the in-degree and out-degree of any EV using CS $d$ to be equal. Constraints \eqref{Cons3} force $y^{m}_{d}=1$ if EV $m$ visits CS $d$. Constraints \eqref{Cons4} ensure that EVs start and end their trip from the base depot $d_0$. Constraints \eqref{Cons5,6} guarantee that each target should be visited exactly once and by one EV. Connectivity of any feasible solution is guaranteed by constraints \eqref{Cons7}. Constraints \eqref{Cons8} introduce the flow variable $z_{ij}^{m}$ for each edge $(i,j) \in E$ and also removes the sub-tours. Constraints \eqref{Cons9} and \eqref{Cons10} ensure that no EV runs out of charge during its trip. Finally, constraints \eqref{Cons11} and \eqref{Cons12} define the restrictions for the decision variables.
	
	\begin{proposition} \label{prop:Con_Replacement}
		The following constraints are valid LP-relaxation of constraints \eqref{Cons3} and \eqref{Cons12}:
		\begin{alignat}{3}
		& {x_{di}^{m}} \leq y_{d}^{m} \hspace{0.5cm} \forall i \in T \cup  \{d_{0}\} \  d \in \bar{D},m \in M,  && \label{Cons14} \\  
		& 0\leq y_{d}^{m}\leq 1 \hspace{0.5cm} \forall  d \in \bar{D}, m \in M.  && \label{Cons15} 
		\end{alignat}
	\end{proposition}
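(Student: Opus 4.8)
The plan is to show that replacing the pair \eqref{Cons3}, \eqref{Cons12} by the pair \eqref{Cons14}, \eqref{Cons15} (a) preserves every feasible solution of the MEVRP model, and (b) introduces no new solution that is integral in the $x$-variables; hence the two formulations describe the same set of EV routes, while the word ``relaxation'' reflects that \eqref{Cons15} trades the integrality \eqref{Cons12} for box constraints and the resulting linear relaxation is no weaker. I would prove the two inclusions in turn.

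For (a): fix $m\in M$, $d\in\bar D$, $i\in T\cup\{d_0\}$, and let $(x,z,y,w)$ be feasible for \eqref{obj}--\eqref{Cons12}. Since all $x$-variables are $0/1$ and $y_d^m\in\{0,1\}$, constraint \eqref{Cons3} gives $x_{di}^m\le\sum_{j\in V}x_{dj}^m\le q\,y_d^m$; if $y_d^m=0$ this forces $x_{di}^m=0=y_d^m$, and if $y_d^m=1$ then $x_{di}^m\le 1=y_d^m$ trivially. Hence \eqref{Cons14} holds, and \eqref{Cons15} is immediate from $\{0,1\}\subseteq[0,1]$.

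For (b): take a solution with integral $x$ satisfying \eqref{obj}--\eqref{Cons11} together with \eqref{Cons14}--\eqref{Cons15}. The observation driving the argument is that \eqref{Cons14} and the connectivity cuts \eqref{Cons7} together determine $y_d^m$. If EV $m$ never visits CS $d$, then taking $S=\{d\}$ in \eqref{Cons7} gives $y_d^m\le\sum_{j\in V}x_{dj}^m=0$, so $y_d^m=0$; if EV $m$ does visit $d$, then by the degree balance \eqref{Cons2} it also leaves along some arc $(d,i)$, and after reducing to the case $i\in T\cup\{d_0\}$ one gets $x_{di}^m=1$, hence $y_d^m=1$ from \eqref{Cons14}--\eqref{Cons15}. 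In either case $y_d^m\in\{0,1\}$, so \eqref{Cons12} is recovered, and \eqref{Cons3} follows since its right-hand side is then $0$ (when $\sum_j x_{dj}^m=0$) or $q=|T|$, the latter being at least $\sum_{j\in V}x_{dj}^m$ because the single-visit constraints \eqref{Cons5,6} force a distinct target visit between any two consecutive passages of EV $m$ through $d$. Finally, summing \eqref{Cons14} over $i$ shows it implies a disaggregated strengthening of \eqref{Cons3}, so the linear relaxation of the new system is at least as tight; together with (a) and (b) this is the claim.

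I expect the main obstacle to lie in step (b), specifically in the mismatch between the index set $T\cup\{d_0\}$ of \eqref{Cons14} and the full vertex set $V$ of \eqref{Cons3}: one must either rule out arcs that run directly between two charging stations or argue that such an arc can be removed without destroying charge-feasibility, which is not automatic, since the triangle inequality on $f^m$ only gives $f_{di}^m\le f_{dd'}^m+f_{d'i}^m$ and the right side can exceed $F_m$. I would handle this using \eqref{Cons2}, \eqref{Cons5,6} and \eqref{Cons7}, the cleanest route being to show directly that setting $y_d^m=\mathbf 1[\text{EV }m\text{ visits }d]$ never violates \eqref{Cons7}, so that the $x$-projections of the two feasible regions coincide irrespective of charging-station-to-charging-station arcs.
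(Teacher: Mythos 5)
Your two-part structure --- (a) every point feasible for \eqref{obj}--\eqref{Cons12} satisfies \eqref{Cons14}--\eqref{Cons15}, and (b) integrality of $y$ is recovered from integrality of $x$ --- is exactly the approach of the paper's proof, which compresses both steps into two sentences and a citation to Sundar et al. Your part (a) is the paper's first claim made precise and is correct. For part (b) the paper simply asserts that \eqref{Cons14} forces $y_d^m\in\{0,1\}$ because $x_{di}^m\in\{0,1\}$ for $i\in T\cup\{d_0\}$; the obstacle you flag --- that \eqref{Cons14} is indexed only over $i\in T\cup\{d_0\}$, so a charging station whose incident arcs in the tour all run to other charging stations leaves $y_d^m$ unforced --- is genuine and is not addressed in the paper at all, so on this point your write-up is more careful than the source. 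Your repair for the unvisited case via $S=\{d\}$ in \eqref{Cons7} is correct, and your closing suggestion (verify that setting $y_d^m=1$ exactly when $m$ visits $d$ never violates \eqref{Cons7}) is the right way to finish. Two things to tighten: first, your claim that \eqref{Cons5,6} force a distinct target visit between consecutive passages of EV $m$ through $d$ is not justified --- a segment $d\to d'\to d''\to d$ visits no target --- so the bound $\sum_{j}x_{dj}^m\le q$ should instead be argued from $q$ being chosen large enough, or conceded as a restriction on pathological tours; second, the one configuration the relaxed system admits and the original forbids is a cycle consisting only of charging stations detached from the depot, and this cannot be excluded by feasibility arguments alone --- it is excluded only because such a cycle adds non-negative cost to the route being minimized. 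You should therefore state explicitly whether you are claiming equality of the integer feasible regions (false, because of pure-CS cycles) or only equality of optimal values and validity of the cuts (true, and all the proposition actually needs).
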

	\begin{proof}
		Since constraints (\ref{Cons14}) ensure that EV $m$ can use a charging station $d$ only if $y_{d}^{m}=1$, hence they are valid constraints for MEVRP. Besides, they force the value of $y_{d}^{m}$ to be either 0 or 1 as $x_{di}^{m} \in \{0,1\}$ for any $i \in T \cup \{0\}$. Therefore, the binary restriction on variables $y_{d}^{m}$ are relaxed in (\ref{Cons15}). The proof is based on \cite{sundar2016exact}
	\end{proof}
	\vspace{-3mm}
	\section{Methodology and Algorithm Development}\label{Method}
	\subsection{Branch and Cut Algorithm}\label{B&C}
	In this section, we describe the main components of a branch-and-cut algorithm used to optimally solve the formulation presented in Section \ref{formula}. Majority of previous studies use a set of dummy binary variables for CSs to maintain the connectivity of the EV tours (\cite{schneider2014electric,zhang2018meta,erdougan2012green}), and the connectivities should be determined prior to the optimization by the users. This type of formulation can significantly increase the computational effort due to poor LP-relaxation. However, our formulation contains constraints \eqref{Cons7} to guarantee the connectivity of any feasible solution without using dummy binary variables. But, the number of such constraints is exponential, and it may not be computationally efficient to consider all these constraints in advance while using an off-the-shelf solver. To address this issue, we relax the constraints \eqref{Cons7} from the formulation. Whenever there is a feasible integer solution, we check if any of the constraints \eqref{Cons7} are violated. If so, we add the corresponding constraint and continue solving the problem. It has been observed that this process is computationally efficient for a variety of the VRP problems \cite{sundar2017path,venkatachalam2019two}.
	
	Now, we describe the details about the algorithm used to find a constraint \eqref{Cons7} that is violated for a given integer feasible solution to the relaxed problem. For every EV $m \in M$, a violated constraint \eqref{Cons7} can be denoted by a subset of vertices $S \subset V \setminus \{d_0\}$ such that $S \cap \bar{D} \neq \emptyset $; and
	$x^{m}(\sigma^{+}(S)) < y_{d}^{m}$ for every $d \in S \cap \bar{D}$ and for every EV. We construct an auxiliary graph $G^{'}=(V^{'},E^{'})$ for any feasible solution where $V^{'}= T \cup {d_0} \cup \{ d \in \bar{D} : y_{d}^{m}=1 \}$, and $E^{'}=\{(i,j) \in E : x_{ij}^{m}=1\}$. We then find the strongly connected components (SCC) of this graph. Every SCC sub-graph which does not contain the base depot $d_0$ violates the constraint \eqref{Cons7}. Hence, we add all these constraints for any feasible integer solution until we reach optimality. To implement this algorithm within the branch and cut framework, we use the \textit{Callback} feature provided by most of the commercial solvers like Gurobi \cite{gurobi}. Although the branch-and-cut can find an optimal solution, the MEVRP is an extension of VRP problem and it is NP-hard \cite{erdougan2012green}. Thus, to circumvent computational challenges for large-scale instances, we develop a hybrid heuristic method in the next section.
	\vspace{-3mm}
	\subsection{HHA Method}
	To solve MEVRP using a heuristic algorithm, we have three major challenges: 1) assigning targets to the EVs; 2) finding the best route for each EV; and 3) maintaining feasibility such that each EV does not run out of fuel. Each of these is complex, hence a naive heuristic may not be sufficient. Therefore, we implemented hybrid heuristic algorithm by integrating an linear programming model, VNS, genetic algorithm and multiple heuristics in three different phases to produce high-quality solutions.\par
	
	The flowchart of HHA for MEVRP is shown in Fig. \ref{Diagram}. The algorithm initializes by computing a modified traveling cost matrix for each EV to consider the charging limitation of EVs. Subsequently, a linear programming (LP) relaxation of an assignment problem is solved to assign the targets to the EVs. Then, an optimal or a sub-optimal travelling salesman problem (TSP) tour for the EVs is determined by the Lin-Kernigan-Helgaun heuristic \cite{helsgaun2000effective}. In the next step, the feasibility of each route in terms of range limitation before charging is checked. On every infeasible route, a novel heuristic is applied to find CSs for recharging and the distance traveled by each EV is recalculated, and an initial solution is obtained. For a pool of high-quality solutions, an iterative VNS procedure is implemented, and uses the initial solution as the incumbent. Three different insertions and a swap operation are used to improve the incumbent solution. At each iteration, a new solution is generated and compared to the incumbent. If the new solution is better than the incumbent, it is considered as the `new' incumbent and added to a pool. Also, if a new solution is not better but has a cost relatively closer compared to the incumbent, it is considered as a potentially good solution and added to the pool. This process is repeated until either no improvement is found or the maximum number of iterations is reached. 
	
	Once the pool is filled with high-quality solutions using VNS, GA parameters such as the iteration number, population size, crossover rate, mutation rate, stopping criteria are initialized. The solutions in the pool represent GA chromosomes, and the fitness value of chromosomes is considered as the maximum distance traveled by any EV. The chromosomes are sorted based on their fitness value and the ones with the higher fitness values are eliminated based on fixed population size. During the improvement phase, through a roulette wheel selection operation, some chromosomes are selected for the GA operations. The GA operations such as crossover and mutation are performed to generate new solutions (offsprings). The routing and feasibility check are performed again on the offsprings. The fitness value of the feasible offsprings is measured and compared to other chromosomes. These steps constitute an iteration, and then the roulette wheel selection is applied again to begin the next iteration. The HHA is terminated whenever a stopping criterion is met. In the post improvement, a heuristic is used on some of the best chromosomes to further improve the solution from the improvement phase. Steps of HHA are explained in the subsequent sections.
	
	\begin{figure*}
		\centering
		\includegraphics[scale=0.4]{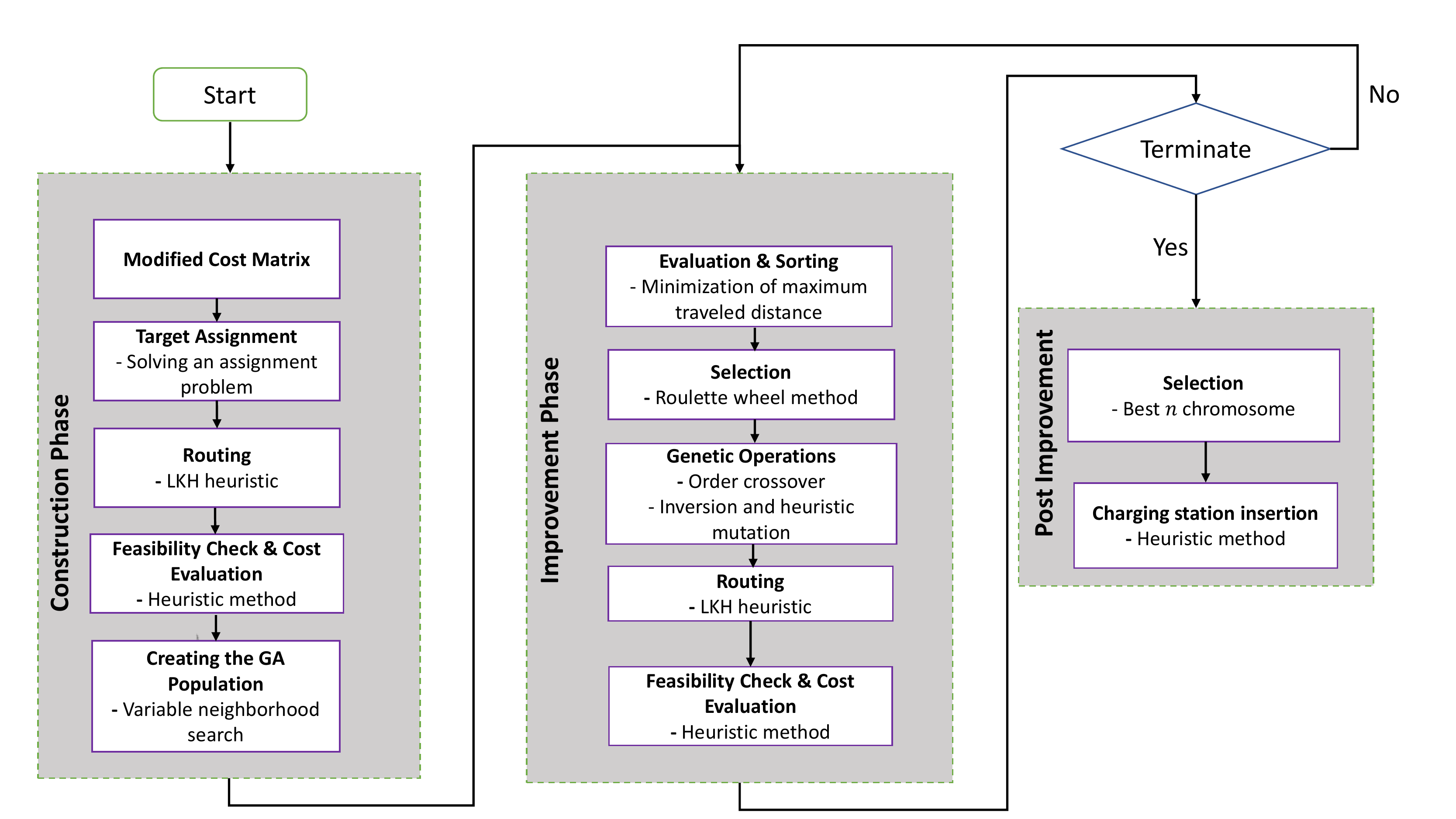}
		\vspace{-1mm}
		\caption {Flowchart of the HHA approach including the three main phases: construction, improvement and post improvement.}
		\label{Diagram}
	\end{figure*}
	
	\vspace{2mm}
	\subsubsection{Construction Phase}
	The goal of construction phase is to produce high-quality feasible solutions as initial solutions for the MEVRP. A series of steps are followed, and the details are elaborated in the following sections.
	\paragraph{Path Representation}
	To encode the solution of MEVRP problem, we use path representation in a way that targets are listed based on the order in which they are visited. Suppose that there are 10 targets and numbered from $t_5$ to $t_{14}$. In order to form a chromosome, we generate a path where targets are randomly placed in each gene of the chromosome. A sample chromosome of MEVRP problem is as follows:
	\begin{center}
		\begin{tabular}{ |c|c|c|c|c|c|c|c|c|c| } 
			\hline
			$t_{10}$ & $t_{7}$ & $t_{11}$ & $t_{12}$ & $t_{9}$ & $t_{5}$ & $t_{6}$ & $t_{8}$ & $t_{13}$& $t_{14}$ \\ 
			\hline
		\end{tabular}
		\label{chr}
	\end{center}
	Note that each chromosome contains $|M|$ string, each related to an EV.
	\paragraph{Modified Cost Matrix}
	In this step, considering the range limitation of EVs, a new traveling cost based on  \cite{khuller2007fill} is computed. The new cost matrix considers the additional charge that an EV may need to visit the available CSs as it traverses between two targets. The maximum charge remaining at the EV's battery when it visits a target $i$ is denoted as $F-f^{min}_{d_{i}}$, where $f^{min}_{d_{i}}$ denotes the amount of fuel an EV requires to reach a CS with the minimum distance from target $i$. This ensures that in any feasible tour, an EV must have an option of recharging at the nearest CS to continue visiting other targets when it reaches a target $i$. Hence, an EV can directly travel from target $i$ to target $j$ if and only if $f_{ij} \leq F-f^{min}_{d_{i}}-f^{min}_{d_{j}}$. If the EV is unable to directly travel from target $i$ to $j$, an auxiliary graph $G^{*}=(V^*,E^*)$ is created where $V^*= \bar{D}\cup{\{i,j\}}$. Any edge that can satisfy the fuel constraint will be added to the auxiliary graph. The following three sets of edges to the graph: 
	
	\begin{equation}\nonumber
	E^{*}=:
	\begin{cases}
	(i,d): & \text{if}\ f_{id}\leq  F-f^{min}_{d_{i}}, \forall d \in \bar{D}, \\
	\cup (d,j): & \text{if}\ f_{dj}\leq  F-f^{min}_{d_{j}}, \forall d \in \bar{D}, \\
	\cup (d_k,d_{k^{'}}): & \text{if}\ f_{d_kd_{k^{'}}}\leq  F, \forall d_k,d_{k^{'}} \in \bar{D}. 
	\end{cases}
	\end{equation}
	
	For every EV, the cost (length) of the edges in the graph $E^{*}$ is calculated by finding the shortest path between any two nodes using Dijkstra's Algorithm \cite{dijkstra1959note}. The modified cost matrix is created by replacing the new computed costs with old costs in the original cost matrix. Let the modified cost vector be denoted as $c'$.
	\vspace{-1mm}
	\paragraph{Target Assignment and Routing}
	To initially assign targets to the EVs, we use a LP-based Load Balancing Heuristic (LLBH) as  suggested in \cite{carlsson2009solving} with some modifications. The LLBH assumes that the distance traveled by the EVs should almost be the same, and designed for multi-depot problems. Hence, in a network where targets are uniformly distributed, EVs visit nearly the same number of targets. For the sake of compatibility, we perturb the location of EV in the base depot to create multiple depots. We uniformly place the EVs on a small circle around the base depot as shown in Fig. \ref{pert}. Given a set of $K$ base depots, indexed by $i$ where one EV is stationed at each of them, and set of $T$ targets, indexed by $j$, the LLBH solves the relaxation of the following assignment problem where $u_{ij}$ is 1 if target $j$ is assigned to the depot $i$, and 0 otherwise. Also, $c'_{ij}$ indicates the modified cost matrix.
	\begin{figure}[!htbp]
		\centering
		\includegraphics[scale=0.3]{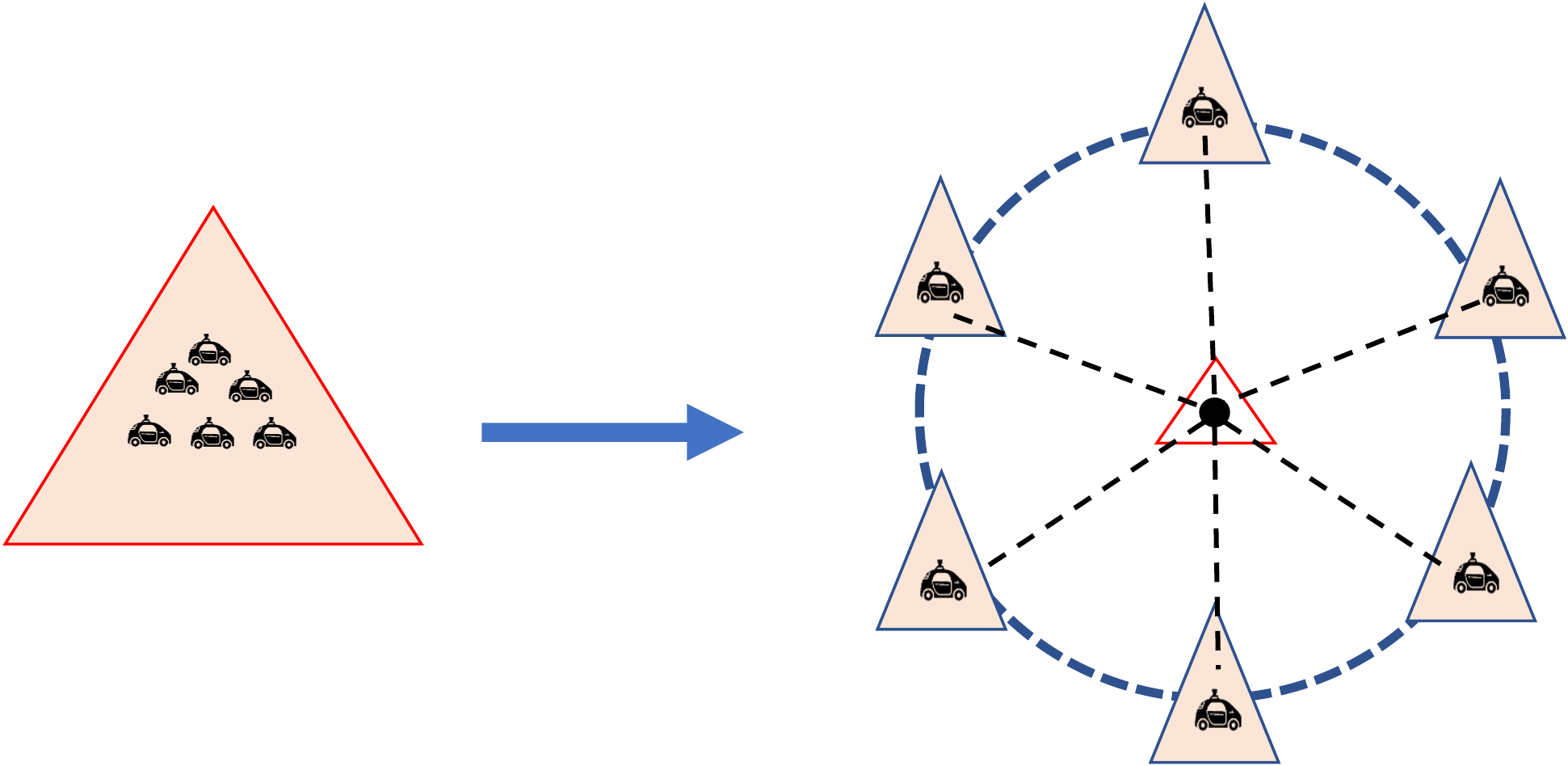}
		\vspace{-1mm}
		\captionsetup{justification=centering}
		\caption{Conversion of single depot to multi-depots using perturbation method.}
		\label{pert}
	\end{figure}
	\begin{alignat}{3}
	& \text{\textbf{P}: Min} \sum_{i \in K} \sum_{j \in T} c'_{ij}u_{ij}  && \label{obj_2} \\ 
	& \text{s.t. } && \nonumber \\
	& \sum_{i \in K}{u_{ij}}= 1 \hspace{0.6cm} \forall   j \in T,   && \label{Cons20} \\  
	& \sum_{j \in T}{u_{ij}}= \floor{\frac{|T|}{K}} \hspace{0.6cm} \forall   i \in K,   && \label{Cons21} \\  
	& u_{ij}\in \{0,1\}  \hspace{0.2cm}  \forall i \in K, j \in T. &&\label{Cons22}
	\end{alignat}
	
	Constraints (\ref{Cons20}) assign targets to the `$K$' copies of depots made around the base depot $d_0$ where each depot contains one EV. Constraints (\ref{Cons21}) determine the number of targets that each EV should visit. For the cases where $\frac{|T|}{K}$ is fractional, as per \cite{carlsson2009solving}, we assume that $|T|= pK+r$, with $p,r \in \mathbb{Z^{+}}$ and $r$ is residue. The extra $r$ targets are assigned to the EVs based on a saving technique described in the next section. After the initial assignment of targets to EVs by solving \textbf{P}, the auxiliary depots are removed. Then, we add the base depot ($d_0$) to the beginning and end of each route which result in a group of strings.
	Now, we solve a single EV routing problem for every set of targets assigned to each EV including the base depot using the Lin-Kernigan-Helgaun (LKH). LKH is considered as one of the best algorithms for solving single vehicle routing problem without recharging constraints.
	
	\paragraph{Feasibility Check} \label{FC} 
	The route found by LKH heuristic for each EV could be infeasible due to the exclusion of range constraint \eqref{Cons10}. Unlike the other approaches in literature where the infeasible routes are dismissed or penalized, a novel feasibility approach is used to convert them to feasible routes. This method is specifically beneficial when number of CSs are limited. Hence, this step is performed to attain feasibility for the tours. We calculate the charge consumption of the EVs as they traverse along their route. Whenever the charge consumption exceeds the battery capacity, we stop at the last target (e.g. $t_i$) that EV visited before recharging. At this step, a CS should be selected for recharging. The algorithm developed by  \cite{zhang2018meta} considers the minimum of $dist(t_i,d_{k})$ $\forall k \in \bar{D}$, where `$dist$' represent distances based on the parameter $f$. However, in this work, we choose the minimum of $dist(t_{i+1},d_{k})$ $\forall k \in \bar{D} $ . The advantage of this approach is that the EV will have more charge to complete the rest of the trip, and also has less necessity for recharging. Consequently, this may reduce the overall distance. However, if the EV does not have sufficient charge to reach $d_k$, then the minimum of $dist(t_{i},d_{k^{'}})$ $\forall k^{'} \in \bar{D}\setminus \{k\}$  is selected for recharging. If visiting $d_{k^{'}}$ is also not possible, then ``backward" move is performed to reach the previous edge ($t_{i-1}$, $t_i$) and perform the similar evaluation. A route is infeasible if one of the following circumstances occurs: 1) backward moves resulted in visiting the base depot; 2) all the available CSs in one edge are visited and yet it is impossible to move to the next edge; or 3) exiting from a target is impossible due to insufficient charge, and also the closet CS to the target is visited in the previous edge.\\
	\indent The feasibility check procedure is given in Algorithm \ref{f_c}. In this algorithm, a MEVRP route is defined as a sequence of targets ($t_0,t_1,t_2,...,t_{i-1},t_i,t_{i+1},...,t_{n+1}$) where the $t_0$ and $t_{n+1}$ represent the depot. We also define $m_{(t_i,t_{i+1})}=1$ if the edge $(i,i+1)$ is traversed by an EV. We consider $c^{min}_{d_{i}}$ for the CS $d$ which has the minimum distance to the target $i$. Also,  $d_{(v_i,v_{i+1})}^{k}=1$ if the eligible CS $d^{k},  \forall k \in \bar{D}$ is visited by the EV, and location of the EV is referred as $loc$.
\begin{algorithm}[!htbp]
\caption{: Feasibility Check}
	\begin{algorithmic}
    	\State \textbf{Initialization:}
    	\State \hskip1.5em Set: \textit{Status} $\gets$ Feasible, \textit{BM} $\gets$ False, \textit{FM} $\gets$ True, 
    	\State \hskip3.4em  \textit{F} $\gets$ Battery Capacity, \textit{i $\gets$ $0$}, \textit{$loc$ $\gets$ $t_i$}
    	\State \textbf{while}  \textit{any $m_{(t_i,t_{i+1})}\neq 1$ }:
    	\State \hskip2em \textbf{if} \textit{Status = Infeasible}
    	\State \hskip3em  \textit{break} 
    	\State \hskip2em \textbf{else} \textit{do FM}
    	\State \hskip2em $i$++
    	\State \hskip2em \textbf{while} \textit{FM = False}
    	\State \hskip3.8em  \textit{do BM} 
    	\State \textbf{Forward Move (FM):}
    	\State \hskip1em \textbf{if:} \textit{$z \geq f_{(t_i,t_{i+1})}$}:
    	\State \hskip1.5em  loc $\gets$  $t_{i+1}$,  $z-=f_{(t_i,t_{i+1})}$ , $m_{(t_i,t_{i+1})}= 1$ \State \hskip2em \textbf{Return} True
    	\State \hskip1em \textbf{elif:} \textit{$z \geq  f_{(t_i,c^{min}_{d_{i+1}})}$ and $F \geq f_{(c^{min}_{d_{i+1}},t_i+1)}$  and  \State \hskip2.75em $m_{(t_i,t_{i+1})}\neq 1$}:
    	\State \hskip2em  loc $\gets$ $t_{i+1}$,  $z= F-f_{(c^{min}_{d_{i+1}},j)}$ , $m_{(t_i,t_{i+1})}= 1$,
    	\State \hskip3.5em  $d_{(t_i,t_{i+1})}^{k}=1$ \State \hskip2em \textbf{Return}= True
    	\State \hskip1em \textbf{elif:} \textit{$z \geq f_{(i,c^{min}_{d_i})}$ and $F$ $\geq$ $f_{(t_i,c^{min}_{d_i})}$  and $m_{(t_i,t_{i+1})}\neq 1$}:
    	\State \hskip3.5em  loc $\gets$ $t_{i+1}$,  $z= F-f_{(c^{min}_{d_i},t_{i+1})}$ , $m_{(t_i,t_{i+1})}= 1$,
    	\State \hskip3.5em $d_{(t_i,t_{i+1})}^{k}=1$
    	\State \hskip2em  \textbf{Return} True
    	\State \hskip1em \textbf{else:} \textbf{Return} False
    	\State \textbf{Backward Move (BM):}
    	\State \hskip3.5em  \textbf{if:} loc = $t_0$ \textbf{or} $d_{(t_i,t_{i+1})}^{k}=1, \forall k \in \bar{D}$
    	\State \hskip4.5em  \textit{Status} $\gets$ Infeasible
    	\State \hskip4.5em  \textbf{Stop} 
    	\State \hskip3.5em  \textbf{else:}
    	\State \hskip4.5em   loc $\gets$  $t_{i-1}$, $z+=f_{(t_{i-1},t_{i})}$, $m_{(t_{i-1},t_{i})}= 0$,
    	\State \hskip4.5em    $d_{(t_{i-1},t_{i})}^{k}=0$, 
    	\State \hskip4.5em   do \textbf{FM}
    \end{algorithmic}
    \label{f_c}
\end{algorithm}
    \indent The feasibility procedure is applied on every route to create a feasible solution along with its corresponding cost for the MEVRP problem. This solution $x$ is considered as the incumbent solution $x_{inc}=x$.
	\paragraph{Variable Neighborhood Search (VNS)}
	
	To create a pool of high-quality solutions for the GA, a VNS-based heuristic is used. The VNS framework is based on a systematic change of a neighborhood integrated with a local search \cite{hansen2010variable}. A local search starts with an initial solution $x_{inc}$ and looks for a descent direction from $x$ within a neighborhood $N(x)$. The algorithm stops if there is no further improvement. In this problem, four neighborhoods are used, including one, two, and three insertions referred as $N_{1}(x)$, $N_{2}(x)$ and $N_{3}(x)$, respectively, and swap operation as $N_{4}(x)$. In the insertion procedure, the initial solution is improved by removing a target from the longest route and inserting to another route. Based on initial solution obtained in the previous section, a new solution $x'$ from $N_{1}(x)$ is generated. In order to select the target $i$ for removal, the savings due to removing each target from the longest route is calculated as follows:
	\begin{alignat}{3}
	dist(t_j,t_i) + dist(t_i,t_k) - dist(t_j,t_k), \label{saving}
	\end{alignat}
	where $t_j$ and $t_k$ are the preceding and succeeding targets in the same route. Target with the highest savings from the longest route is inserted into another route.
	
	In the next step, the route for insertion of target is selected. The insertion cost of the selected target in other routes is calculated, and the route with the minimum insertion cost is selected. The insertion cost of each route is calculated using (\ref{saving}) where $t_i$ is inserted between any two consecutive targets $t_j$ and $t_k$ from the route. If $f(x') <f(x_{inc})$ then $x_{inc}=x'$, where $f(.)$ is the objective function represented in \eqref{obj}. Then, we apply the LKH and feasibility check on the new solution The procedure continues with a new solution from $N_{1}(x')$, otherwise, the next target with the highest savings is selected. If there is no improvement by removing any target from the longest route and inserting into another route, a new solution from $N_{2}(x)$ is used for next iteration.
	
	Whenever there is a better solution from $N_{2}(x)$, the iterations starts over from 1-insertion neighborhood. Otherwise, another target is selected for the next iteration. If there is no improvement in the 2-insertion neighborhood, $N_{3}(x)$ is used and continued as before. Apart from insertions, a swap operation $N_{4}(x)$ is also used by randomly selecting a target from the longest route and exchanging it with any target from the route with the lowest insertion cost to further improve the solution. The swap operation is activated between any insertion operations (among $N_{1}(x)$, $N_{2}(x)$, and $N_{3}(x)$) and repeated for a predetermined number of iterations ($l^{s}_{max}$). The procedure is terminated if the predefined number of non-improvement iterations is reached.
	
	The VNS algorithm procedure is summarized in Algorithm \ref{vns}.
	
	\begin{algorithm}[!htbp]
		\caption{: Variable Neighborhood Search }
		\begin{algorithmic}
			\State \textbf{Initialization:}
			\State \hskip1.5em $l,k$  $\gets$ 0, $x_{inc}$ $\gets$ $x^{0}$, \textit{$N_1$} $\gets$ True, \textit{$N_2$} $\gets$ False,
			\State \hskip1.5em   \textit{$N_3$} $\gets$ False, $i$ $\gets$ 1
			\State \textbf{while}  \textit{$k \leq l_{i}^{max}$}:
			\State \hskip2em \textbf{while}  \textit{$N_i$} = True
			\State \hskip3em $k$  $\gets$ $k+1$
			\State \hskip3em select $x' \in N_{i}(x_{inc})$
			\State \hskip3em \textbf{if}  $f(x') \leq f(x_{inc})$ :
			\State \hskip4em $x_{inc}=x'$
			, $i$  $\gets$ 1 , $l,k$  $\gets$ 0
			\State \hskip3em \textbf{if}  $\nexists$ $x \in \{x'| x' \in N_{i}(x'), f(x') \leq f(x_{inc})$\}: 
			\State \hskip4em  \textit{$N_i$} $\gets$ False
			\State \hskip4em $i$  $\gets$ $i+1$, \textit{$N_i$} $\gets$ True
			\State \hskip4em \textbf{while} $l < l_{s}^{max}$
			\State \hskip4.5em \textit{do swap} ,
			\State \hskip4.5em \textbf{if}  $f(x') \leq f(x_{inc})$ :
			\State \hskip4.5em $x_{inc}=x'$
			, $i$  $\gets$ 1 , $l,k$  $\gets$ 0
		\end{algorithmic}\label{vns}
	\end{algorithm}
	
	During the search process, every new solution is added to the GA pool. Inspired from simulated annealing, potential good solutions are stored. A solution is potentially good if $\frac{f(x')-f(x_{inc})}{f(x_{inc})} \leq s$, where $f(x')$ and ${f(x_{inc})}$ are the costs of the potentially good and incumbent solutions, and $s$ is a relatively small parameter (e.g., $s$=0.15 in our implementation). In the next step, we apply the GA operations on the pool of high quality solutions.
	
	\vspace{-3mm}
	\subsection{Improvement Phase}
	\subsubsection{Chromosome Selection}
	Chromosome selection significantly affects the GA's convergence. The roulette wheel selection was first introduced by \cite{davis1985applying} to select the chromosomes for GA operations. Each section of the roulette wheel is assigned to a chromosome based on the magnitude of its fitness value. The fitness value of each chromosome is based on objective function value given in \eqref{obj}. The fitness values of the chromosomes determine their chance of being selected.
	\subsubsection{Crossover and Mutation}
	Two approaches are used to diversify the solution space. The first approach is `order crossover' introduced in \cite{gen1996genetic} to sample and combine selections from different potential solutions. In this approach, two offsprings are generated in each iteration by choosing a sub-tour of one parent and maintaining the relative sequence of genes of another parent. The second approach is a form of mutation, which is a genetic operator that reorders some of the gene values in a chromosome from their existing state. We apply heuristic and inversion mutation proposed by \cite{gen1996genetic}. In the heuristic mutation, three genes are randomly selected from each parent, and all the possible combinations of the selected genes are generated. The best chromosome is considered as the offspring. In the inversion mutation, a sub-string of a parent chromosome is selected and used to produce an offspring.
	\subsubsection{Post Improvement}
	GA returns the best set of chromosomes after any of the predefined stopping criteria is met. In the feasibility heuristic, an insertion of a CS is performed whenever a recharging is required. A route for an EV is defined as a sequence of targets and CSs $(d_0,t_i,t_j,d_n,...,t_l,d_m,...,d_0)$ where $d_0$ denotes the base depot, $i,j,l \in V$, and $m,n \in \bar{D}$. In some instances, it is possible to obtain a better fitness value by changing the position of the CSs in chromosomes. For example, authors in \cite{schiffer2018adaptive} proposed a dynamic programming-based approach to identify optimal location for intra-route facilities. Hence, to further enhance the quality of the GA solution, we propose a heuristic which is described by an example as follows: suppose that there are three CSs ($d_1$, $d_2$, and $d_3$), and the following string is a feasible path for an EV (the visited CSs are colored with red):
	\begin{center}
		\begin{tabular}{ |c|c|c|c|c|c|c|c|} 
			\hline
			$d_0$ & $t_{8}$ & $t_{11}$ & \textcolor{red}{$d_3$}&$t_9$ & $t_5$ & \textcolor{red}{$d_2$}& $d_0$ \\ 
			\hline
		\end{tabular}
		\label{chr}
	\end{center}
	Here, a ``sub-string'' is a sequence of visits where the last node is a CS. We select the first sub-string and delete the CS from the sub-string. We then insert all the eligible CSs among the targets to generate new sub-routes. A CS is considered eligible if the EV starting from a target can reach that CS without visiting other CSs and also could reach the next target after recharging. New sub-routes are checked for battery capacity violation. The total distance traveled and the EV's remaining battery charge are stored for each feasible route. In the computational experiments, this considerably decreased the computational effort. The table (a) in Fig. \ref{Improve} shows all the new sub-routes generated by CS insertion where the feasible sub-routes are highlighted.
	Then, we delete the infeasible sub-routes and add the second sub-string. The same CS insertion procedure for the second sub-string is repeated using the information stored in the previous step. The final possible routes are shown in the table (b) in Fig. \ref{Improve}.
	\begin{figure}[!htbp]
		\centering
		\includegraphics[scale=0.25]{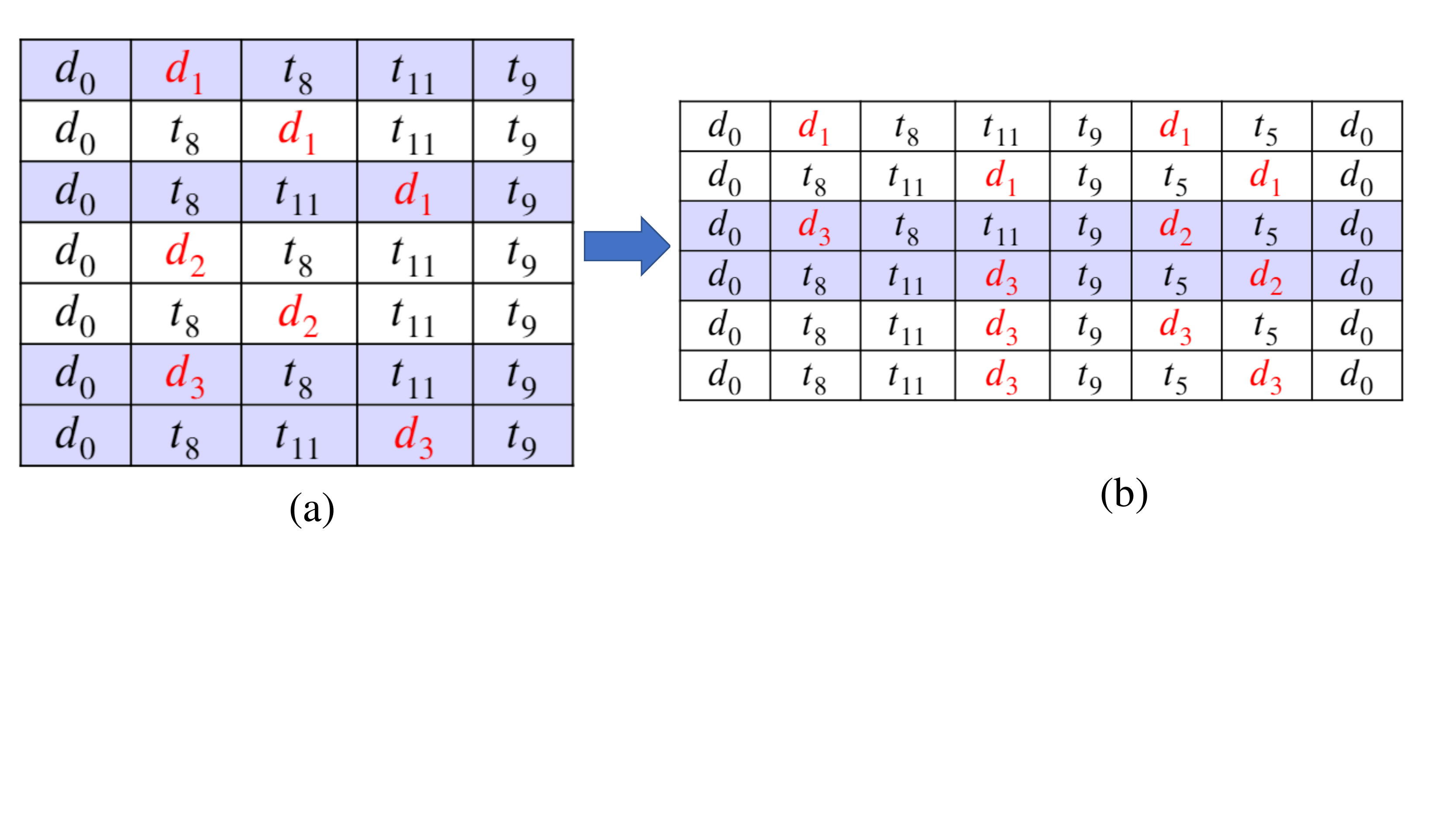}
		\vspace{-18mm}
		\captionsetup{justification=centering}
		\caption{Initial sub-tours generated in the first step of CS insertion heuristic (a), and final feasible routes (b).}
		\label{Improve}
	\end{figure}
	
The procedure is continued to visit the last node in all routes. Then the feasible route with the lowest distance is considered as the EV's total traveled distance. It should be noted that since the heuristic does not change the order of the targets, the chromosome remains the same. So, this heuristic is used only on final $n$ best chromosomes returned by GA where $n$ is a predefined user parameter (e.g., in this work we used $n$=15).
	\vspace{-4mm}
	\section{Experiments and Results}\label{ER}
	In this section, we compare the computational performance of B\&C algorithm and HHA. All the experiments were implemented in Python 3.7, and the MIP models were solved by Gurobi 9.0 \cite{gurobi} using a computer with an Intel \textregistered \, Xeon \textregistered \, CPU E5-2640, 2.60 GHz, and 80GB RAM.
	\vspace{-5mm}
	\subsection{Instance Generation}
	For the computational experiments, we selected three data sets (A, B, and P) of the capacitated vehicle routing problem developed by Augerat et al. \cite{augerat1995computational}. The data sets were modified to adapt for MEVRP by adding CSs. Additionally, a random data set was used to maintain the diversity in our experiments. The details of the data sets are as follows:
	\begin{itemize}
		\item \textit{Random instances}: Random instances were generated in a square grid of size [100, 100], and the base depot at (50, 50). The number of CSs was set to five, and the locations of the depot as well as the CSs were fixed as apriori for all the random test instances. The number of targets varies from 10 to 50 in steps of five, while their locations were uniformly distributed within the square grid. For each of the generated instances, the number of EVs in the base depot is varied from two to eight. The battery capacity of EVs was set to 100, and the energy consumption rate is considered to be 0.8.
		\item \textit{Augerat et al. instances}: The data sets reported in the study by Augerat et al. \cite{augerat1995computational} are developed for capacitated vehicle routing problems. These instances include three sets which differ in distribution and number of targets, vehicle's capacity, demand/capacity, and number of vehicles. To make the instances compatible to MEVRP, we added five CSs. The coordination of CSs are similar for all instances within each data set. Also, the capacity of vehicles is considered as the battery capacity of EVs and capacity tightness as the fuel consumption rate in our problem. Fig. \ref{data} shows the four different data sets.
	\end{itemize}
	\begin{figure}
		\centering
		\includegraphics[scale=0.5]{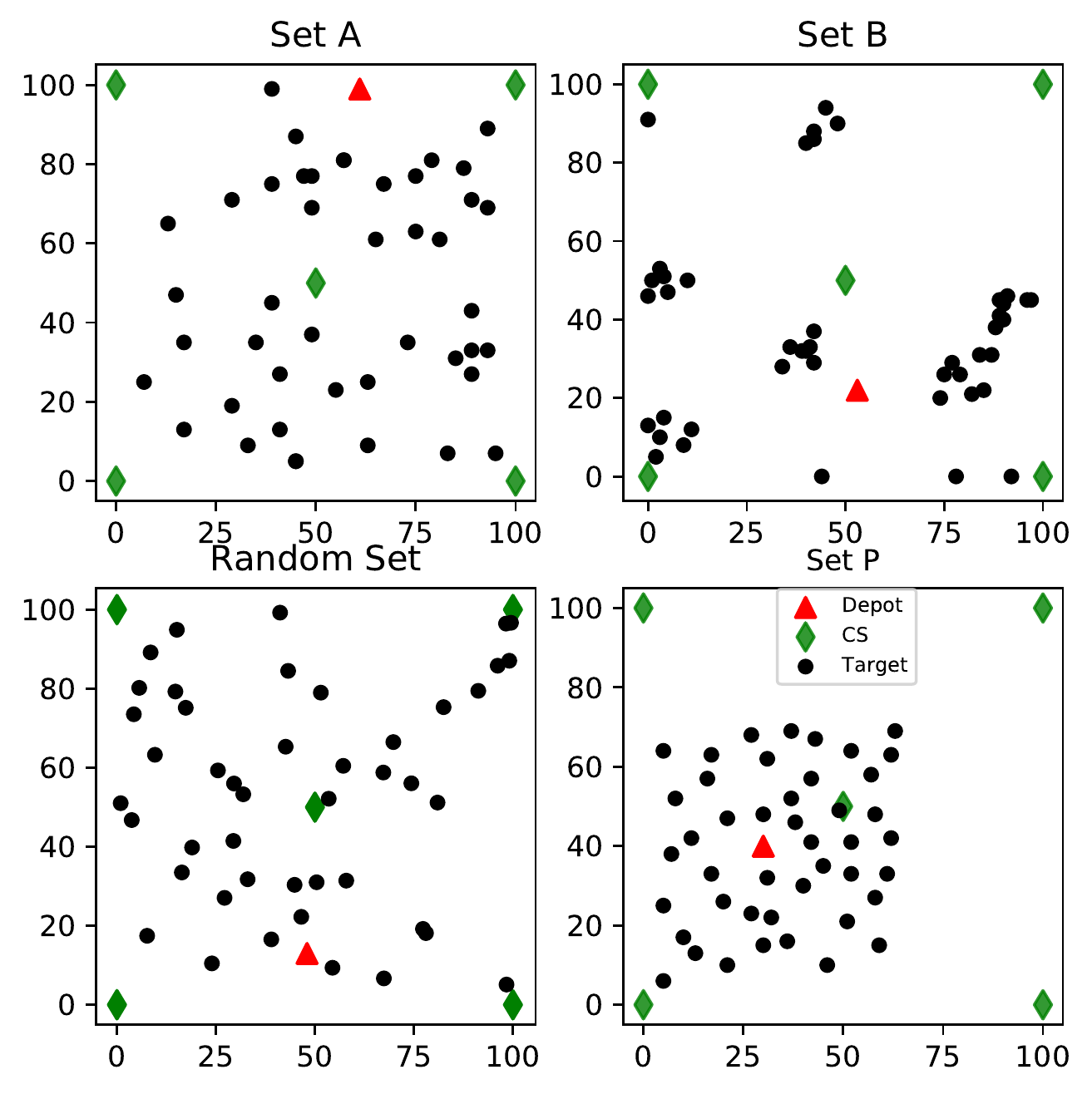}
		\vspace{-2mm}
		\caption {Three data sets based on the study in \cite{augerat1995computational} and a random set for computational experiments.}
		\label{data}
	\end{figure}
	\begin{table*}
		\caption{Characteristics of the large-scale instances for different data sets}
		\centering
		\begin{tabular}{| c | c | c | c | c |}  
			\hline
			Sets & Target range &  Vehicles range &  Consumption rate range & Battery Capacity range \\ 
			\hline
			A & [31,60] & [5,9]& [0.81,0.99] & 100 \\
			\hline
			B & [30,60] & [4,9]& [0.82,1] & 100\\
			\hline
			P& [21,59] & [5,15]& [0.88,0.99]  & [70,170]\\
			\hline
			Random & [20,50] & [4,10]& [0.8,0.8]  & 100\\
			\hline
		\end{tabular}
		
		\label{LST}
	\end{table*}
	\vspace{-3mm}
	\subsection{Parameter Tuning}
    Prior to the numerical experiments, we conducted an analysis on the parameters of maximum number of non-improving iterations for insertion ($l^{i}_{max}$),  maximum number of non-improving iterations for swaps ($l^{s}_{max}$) , GA population size ($P_{size}$) , crossover rate ($C_r$) and mutation rate ($M_r$). A three-level Taguchi design \cite{taguchi1986introduction} framework was used by considering three levels for each parameter using Minitab 19 \cite{Minitab}. A randomly chosen instance with 32  targets and five EVs  was  used for the experiments. For each combination of levels, HHA is run four times and the average of solutions for each combination is taken as response. The response values are then used as inputs, and the optimal levels of parameters are obtained.
	To demonstrate the performance of HHA algorithm for different values of parameters, additional computational experiments to investigate the impact of $l_{i}^{max}$ and $l_{s}^{max}$ parameters were performed. Random instances from each of the four sets were run for different values of parameters. The  tuned parameters are as follows: $l_{i}^{max}=40$,  $l_{s}^{max}=25$, $P_{size}=55$, $C_{r}=0.6$ and $C_{r}=0.2$
		\begin{figure}[H]
		\centering
		\includegraphics[scale=0.5]{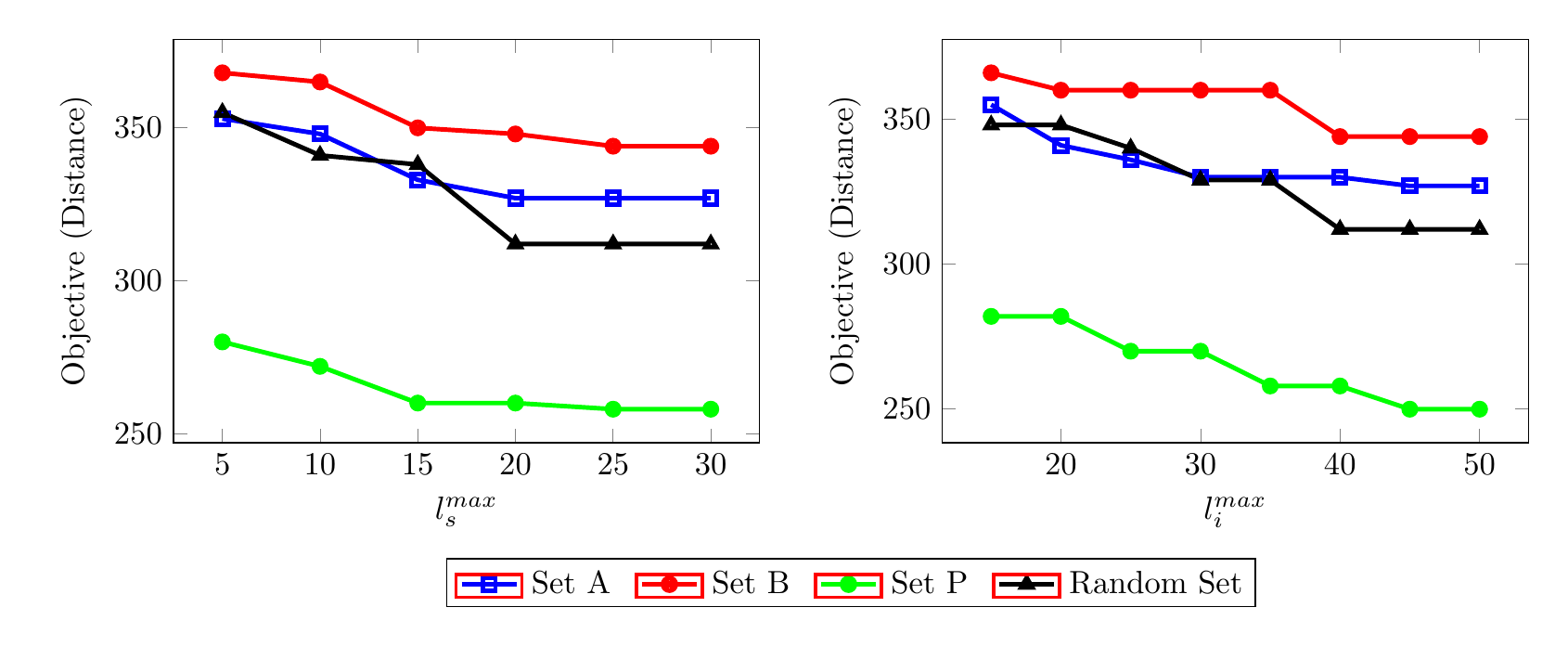}
		\vspace{-2mm}
		\captionsetup{justification=centering}
		\caption{The effect of $l_{i}^{max}$ and $l_{s}^{max}$ parameters on the objective of MEVRP}
		\label{paramss}
	\end{figure}
	\vspace{-5mm}
	\subsection{Experiment on MEVRP Instances}
	\subsubsection{Benchmark Instances}
	For benchmark instances, a subset of targets were used which are the first 10 and 15 targets with two and three EVs within each data set, respectively. We solved the benchmark instances with HHA and compared the it with optimal or near-optimal solution solved by the B\&C algorithm. Fig. \ref{boxplot} represents the differences in the objective function between HHA and B\&C for the 10-targets and 15-targets instances. The objective difference in percentage and run-time in seconds are calculated as $\frac{\text{B\&C(O)-HHA(O)}}{\text{B\&C(O)}}\times 100$, and (B\&C(t)-HHA(t)), where B\&C(O) and HHA(O) indicate objective values, and B\&C(t) and HHA(t) indicate the run-times of B\&C and HHA, respectively. In total, 71 10-targets and 71 15-targets benchmark instances were solved. For all the 10-targets instances, the B\&C algorithm was able to reach solutions within 1\% of the optimality gap. For many of the 15-targets instances from sets A, B, and Random, the B\&C algorithm was not able to find the optimal solution within the stipulated time limit of two hours. However, B\&C found the optimal solution for most of the 15-targets instances from set P. Fig. \ref{boxplot} represents the run-time and objective function differences between HHA and B\&C. For all the completed runs using B\&C, HHA reached the optimal solution or near-optimal solution within less than 2.5\% gap. For the instances where B\&C was not able to reach optimality, the HHA approach could find a solution closer or better compared to the upper bound provided by B\&C. In terms of run-time, for more than half of the 10-targets instances, HHA outperformed B\&C. Also, HHA outperformed B\&C with a much better run-time in the experiments with 15-targets instances. It should be noted that we removed the instances which reached the time-limit in Fig \ref{boxplot} using B\&C to better illustrate the run-time difference in 15-targets instances.
	The results from the benchmark instances indicate that the HHA approach is capable of providing highly efficient routes for EVs for smaller instances as well.
	\vspace{-4mm}
	\begin{figure}[!htbp]
		\centering
		\includegraphics[scale=0.25]{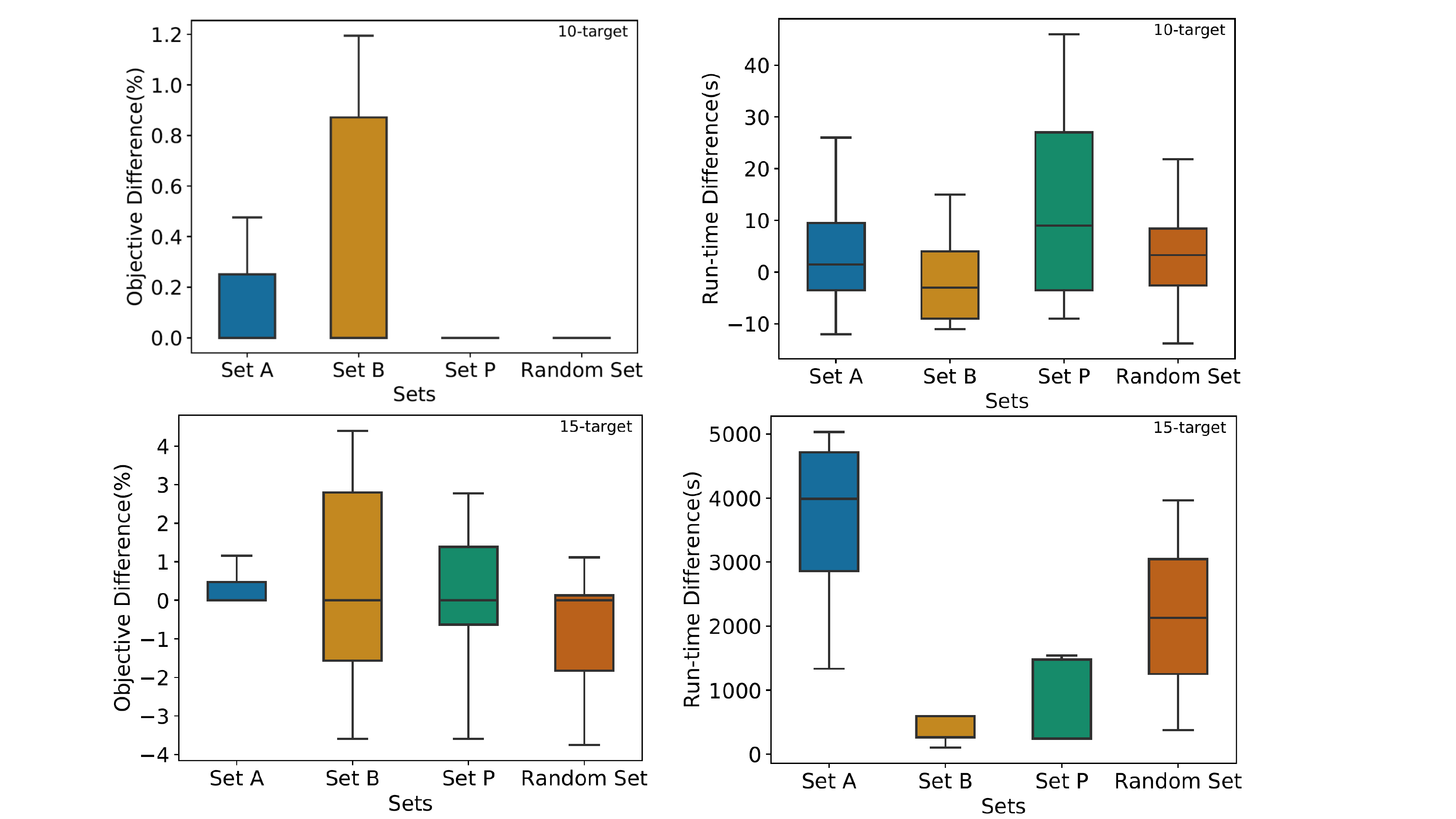}
		\vspace{-1mm}
		\captionsetup{justification=centering}
		\caption{Performance of the HHA compared to the B\&C procedure for benchmark instances with 10 and 15 targets.}
		\captionsetup{justification=centering}
		\label{boxplot}
	\end{figure}
	\subsubsection{Large-scale Instances}
	To demonstrate the performance of HHA on large scale problem sets, we considered 72 instances with the number of targets and EVs up to 60 and 15, respectively. Among these instances, 18 instances are from set A, and 17, 12, and 25 instances are from sets B, P, and Random, respectively. In 31 of the 72 total instances, B\&C procedure was not able to find any feasible solution whereas HHA was capable of providing a feasible solution for all instances in just a few seconds. Table \ref{LST} represents the comparison between the B\&C procedure and HHA for each set. The first column indicates the total number of instances, and the columns `\# of Inf(B\&C)' and `\# of Inf(HHA)' indicate the number of instances in which the B\&C procedure and HHA could not find any feasible solution in the stipulated time limit. The next two columns labeled `AG-B\&C(\%)' and `ART-HHA(\%)' specify the average gap percentage and the average run-time in seconds obtained by the B\&C and HHA approaches. The last column labeled `AOD(\%)' is the average objective difference percentage between B\&C procedure and HHA. For example, in Table \ref{LST}, for the 18 instances in set A, B\&C algorithm found a feasible solution for nine instances with an average optimality gap of 51\%, while HHA found a feasible solution for all 18 instances. Similarly, for sets B, P, and Random, B\&C procedure found a feasible solution for 4, 10 and 18 instances with an average gap of 63.7\%, 51.7\%, and 60.8\%, respectively. For all the experiments using large-scale instances, HHA outperformed B\&C procedure in terms of computational time and the quality of the solution. Fig. \ref{LS_Results} illustrates the performance of HHA compared to the B\&C algorithm where the horizontal and vertical axes indicate the number of targets and the objective for different instances. Also, the disconnected lines indicate the instances where the B\&C could not find any feasible solutions within the time limit.
	\begin{figure}
		\includegraphics[scale=0.33]{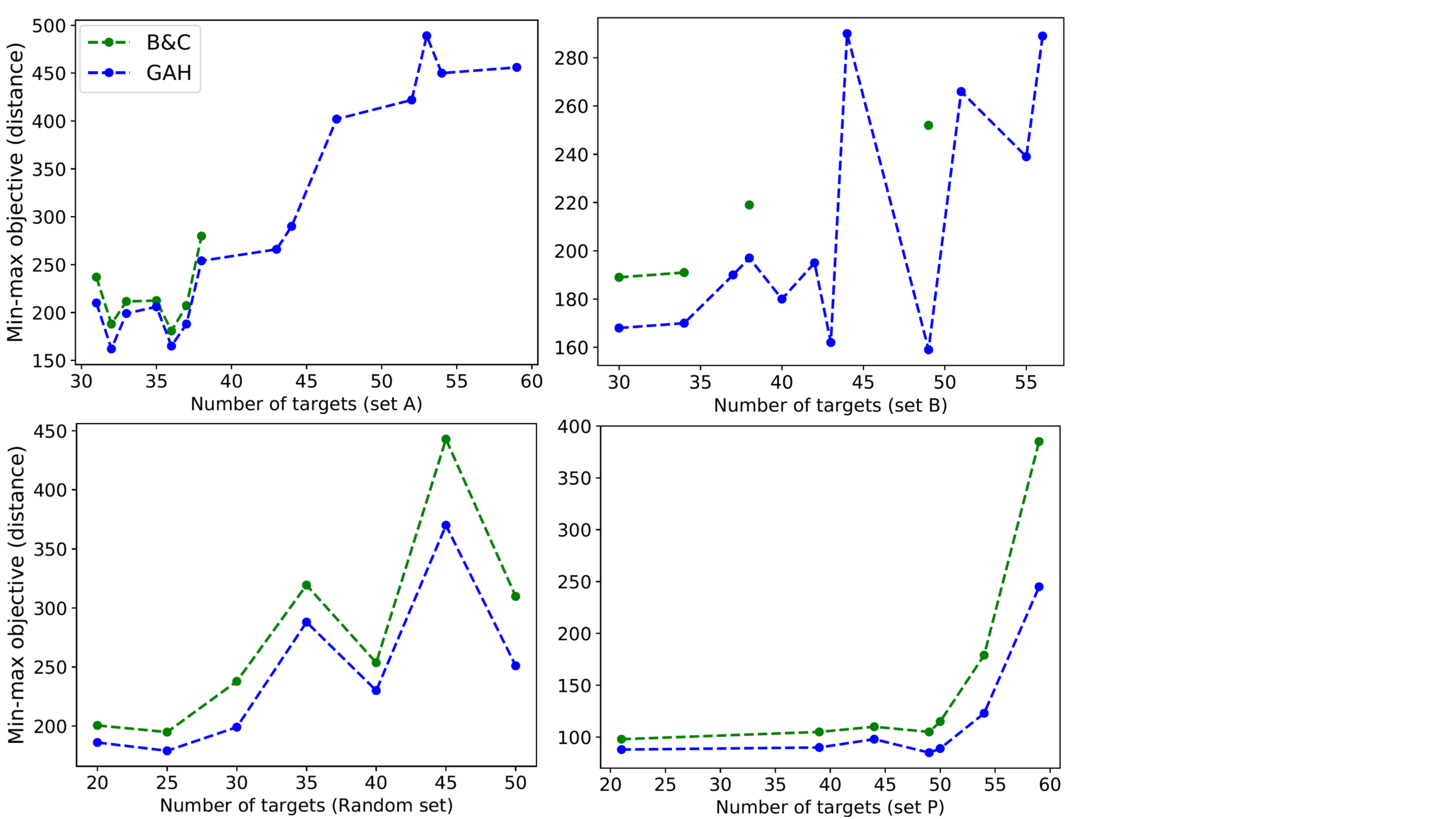}
		\vspace{-1mm}
		\captionsetup{justification=centering}
		\caption{Comparison between the performance of HHA and B\&C for different large-scale instances.}
		\label{LS_Results}
	\end{figure}
	\vspace{-6mm}
	\begin{table*}
		\caption{Comparison of the performances of HHA and B\&C procedure for large-scale instances}
		\centering
		\begin{tabular}{| c | c | c | c | c | c | c | c | c|}  
			\hline
			Sets & \# of Instances & \# of Inf(B\&C)& \# of Inf(HHA)& AG-B\&C(\%)  &ART-HHA(s) & AOD(\%)  \\ 
			\hline
			A & 18 & 9& 0& 51 & 337&12.6\\
			
			B &17& 13& 2& 62.7&  320 &13.3\\
			P &12 &2 &0 & 51.7&  254 & 22.2\\
			
			Random &25&7 &0 &60.8& 263&13.6\\
			\hline
		\end{tabular}
		\label{LST}
	\end{table*} 
	\vspace{1mm}
	\subsection{Sensitivity Analysis}
	\subsubsection{Effect of the Number of EVs}
	We analyze the effect of increasing the number of EVs on objective function and total distance traveled by all the EVs. As shown in Fig. \ref{Inc-EV}, when there is an increase in the number of EVs, in general, the maximum distance traveled by the EVs decreases. However, the total distance traveled by all the EVs increases. The instance from set B has the highest increase in the total distance traveled by the EVs as the number of EVs is increased. This result is likely due to the cluster-like distribution of the targets that occurred in that data set. When the number of EVs surpasses the number of clusters, each cluster is assigned to more than one EV which increase the total distance. Another factor that may significantly affect the total distance is the position of the depot. In the cases where the depot is far from the targets, dispatching multiple EVs could result in a longer total distance. Since the targets are closer to each other in set P, an increase in the number of EVs would not result in a significant decrease or increase in the min-max or total distance, respectively. In conclusion, decision makers should evaluate the trade-off between the maximum and total distances while considering the distribution of the targets and locations of the depots.
	\subsubsection{Effect of the Number of Charging Stations}
	Another perspective is the impact of increase in the number of CSs on the  min-max objective function. We chose an instance from each of the four data sets, and for each instance, we considered two to seven randomly located CSs. Fig. \ref{IncreaseCS} shows the effect of number of CSs on MEVRP for different instances. An increase in the number of CSs has a lower impact on sets P and B sets where the targets are confined to a relatively smaller area. Due to the scattered distribution of the targets in the random set and set A, we observe a rapid decrease in the maximum distance as the number of CSs decreases. This sort of analysis will help the logistics companies to evaluate the marginal benefits of adding more CSs based on the distribution of the customers in their network.
\vspace{-4.5mm}
	\begin{figure}
		\centering
		\includegraphics[scale=0.5]{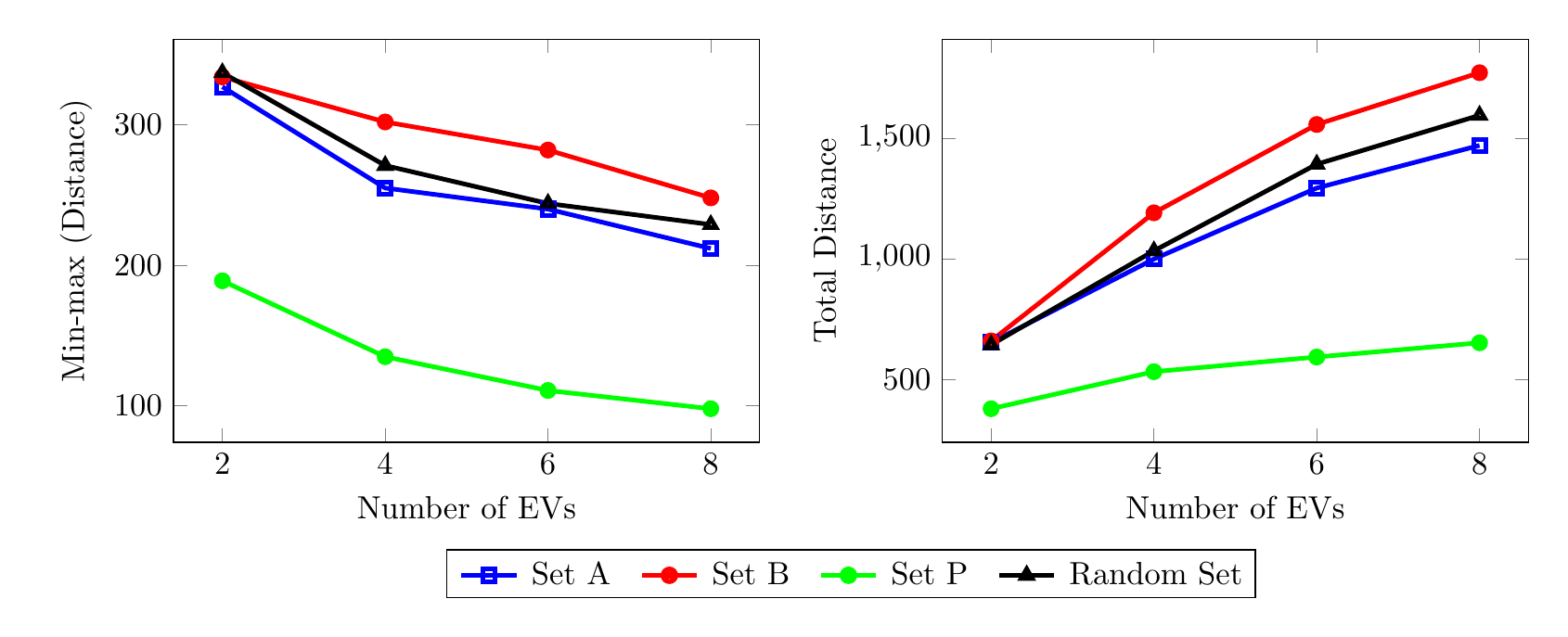}
		\vspace{-2mm}
		\captionsetup{justification=centering}
		\caption{Effect of increase in the number of EVs on the min-max and the total distance traveled by the EVs.}
		\label{Inc-EV}
	\end{figure}
	\begin{figure}
		\centering
		\includegraphics[scale=0.5]{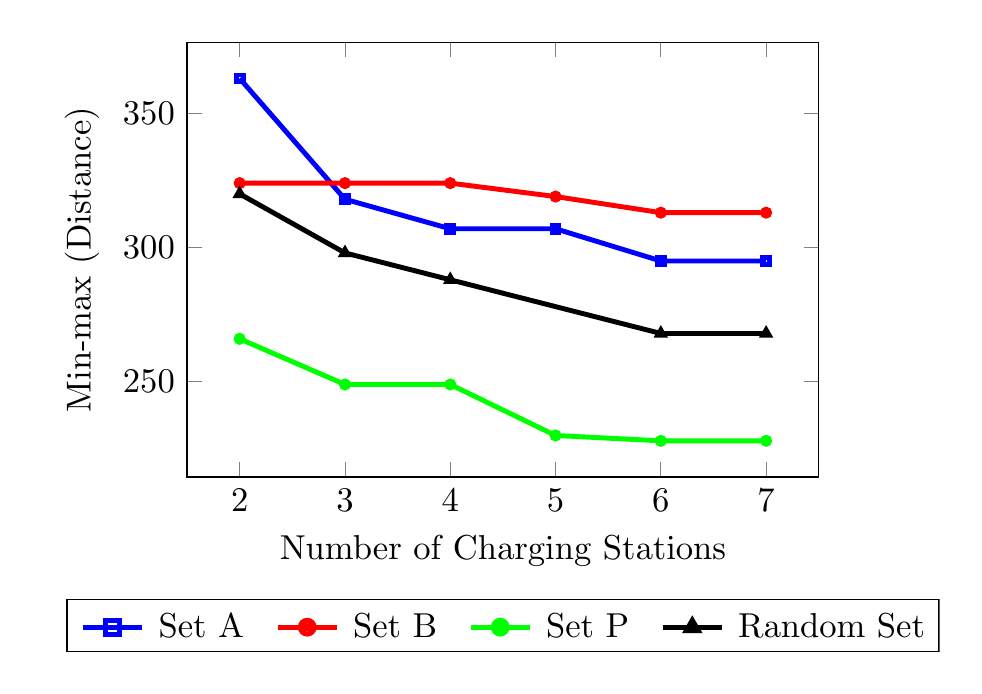}
		\vspace{-1mm}
		\caption {Effect of increase in the number of charging stations on sets A, B, P and Random.}
		\label{IncreaseCS}
	\end{figure}
	\section{Conclusion}\label{Con}
	In this research, we consider a min-max routing problem for a fleet of EVs in the presence of charging stations. We proposed an efficient mixed-integer programming formulation and a hybrid heuristic algorithm with an embedded feasibility method to solve small and large-scale instances, respectively. Numerical studies are performed on randomly generated data sets and capacitated vehicle routing problem data sets with some modifications from literature. The efficacy of the proposed methods are benchmarked using extensive computational experiments. The results indicate that HHA was able to find solutions within 1\% of the optimality gap for the 10-targets, and equal or better solutions in the 15-targets benchmark instances. Also, HHA was able to produce high-quality solutions in large-scale instances whereas the branch-and-cut procedure could not even find a feasible solution for most of the instances within the two hours time limit. The proposed methods are capable for facilitating routing and charging decisions for a fleet of EVs employed by logistics and transportation companies. The sensitivity analysis indicates that with a reasonable sacrifice in the total distance, the proposed approach can significantly decrease the travel time to visit some of the targets. In addition, this may also decrease the maintenance cost of the EVs due to fair and equitable distribution of workload among the EVs fleet. This could potentially decrease the aging or degradation of batteries in EVs. 
	Future work could include EVs with freight capacities and the targets with time windows. Another extension could be incorporating uncertainties or non-linearity in the battery consumption rate as well as the location of target locations, especially for surveillance applications. From an algorithmic perspective, use of decomposition algorithms like the column generation method can also be investigated.
	
\section{Acknowledgement}\label{Con}
The authors wish to acknowledge the technical and financial support of the Automotive Research Center (ARC) in accordance with Cooperative Agreement W56HZV-19-2-0001 U.S. Army CCDC Ground Vehicle Systems Center (GVSC) Warren, MI.

	\vspace{-3mm}
	\bibliographystyle{ieeetr}
	\bibliography{IEEE.bib}

\begin{thebibliography}{10}

\bibitem{nazaripouya2019electric}
H.~Nazaripouya, B.~Wang, and D.~Black, ``Electric vehicles and climate change:
  Additional contribution and improved economic justification,'' {\em IEEE
  Electrification Magazine}, vol.~7, no.~2, pp.~33--39, 2019.

\bibitem{solaymani2019co2}
S.~Solaymani, ``Co2 emissions patterns in 7 top carbon emitter economies: The
  case of transport sector,'' {\em Energy}, vol.~168, pp.~989--1001, 2019.

\bibitem{wu2019role}
J.~Wu, H.~Liao, J.-W. Wang, and T.~Chen, ``The role of environmental concern in
  the public acceptance of autonomous electric vehicles: A survey from china,''
  {\em Transp Res Part F Traffic Psychol Behav}, vol.~60, pp.~37--46, 2019.

\bibitem{yi2018energy}
Z.~Yi, J.~Smart, and M.~Shirk, ``Energy impact evaluation for eco-routing and
  charging of autonomous electric vehicle fleet: Ambient temperature
  consideration,'' {\em Transp. Res. Part C Emerg. Technol.}, vol.~89,
  pp.~344--363, 2018.

\bibitem{zhang2019joint}
H.~Zhang, C.~J. Sheppard, T.~E. Lipman, and S.~J. Moura, ``Joint fleet sizing
  and charging system planning for autonomous electric vehicles,'' {\em IEEE
  trans Intell Transp Syst}, 2019.

\bibitem{sundar2016exact}
K.~Sundar, S.~Venkatachalam, and S.~Rathinam, ``An exact algorithm for a
  fuel-constrained autonomous vehicle path planning problem,'' {\em arXiv
  preprint arXiv:1604.08464}, 2016.

\bibitem{schneider2014electric}
M.~Schneider, A.~Stenger, and D.~Goeke, ``The electric vehicle-routing problem
  with time windows and recharging stations,'' {\em Transportation Science},
  vol.~48, no.~4, pp.~500--520, 2014.

\bibitem{hiermann2016electric}
G.~Hiermann, J.~Puchinger, S.~Ropke, and R.~F. Hartl, ``The electric fleet size
  and mix vehicle routing problem with time windows and recharging stations,''
  {\em Eur. J. Oper. Res.}, vol.~252, no.~3, pp.~995--1018, 2016.

\bibitem{slowik2018continued}
P.~Slowik and N.~Lutsey, ``The continued transition to electric vehicles in us
  cities,'' 2018.

\bibitem{fazeli2020two}
S.~S. Fazeli, S.~Venkatachalam, R.~B. Chinnam, and A.~Murat, ``Two-stage
  stochastic choice modeling approach for electric vehicle charging station
  network design in urban communities,'' {\em IEEE trans Intell Transp Syst},
  2020.

\bibitem{kocc2016green}
{\c{C}}.~Ko{\c{c}} and I.~Karaoglan, ``The green vehicle routing problem: A
  heuristic based exact solution approach,'' {\em Appl. Soft Comput.}, vol.~39,
  pp.~154--164, 2016.

\bibitem{sundar2013algorithms}
K.~Sundar and S.~Rathinam, ``Algorithms for routing an unmanned aerial vehicle
  in the presence of refueling depots,'' {\em IEEE T AUTOM SCI ENG}, vol.~11,
  no.~1, pp.~287--294, 2013.

\bibitem{desaulniers2016exact}
G.~Desaulniers, F.~Errico, S.~Irnich, and M.~Schneider, ``Exact algorithms for
  electric vehicle-routing problems with time windows,'' {\em Operations
  Research}, vol.~64, no.~6, pp.~1388--1405, 2016.

\bibitem{vincent2017simulated}
F.~Y. Vincent, A.~P. Redi, Y.~A. Hidayat, and O.~J. Wibowo, ``A simulated
  annealing heuristic for the hybrid vehicle routing problem,'' {\em Appl. Soft
  Comput.}, vol.~53, pp.~119--132, 2017.

\bibitem{juan2014routing}
A.~A. Juan, J.~Goentzel, and T.~Bekta{\c{s}}, ``Routing fleets with multiple
  driving ranges: Is it possible to use greener fleet configurations?,'' {\em
  Appl. Soft Comput.}, vol.~21, pp.~84--94, 2014.

\bibitem{erdougan2012green}
S.~Erdo{\u{g}}an and E.~Miller-Hooks, ``A green vehicle routing problem,'' {\em
  Transportation research part E: logistics and transportation review},
  vol.~48, no.~1, pp.~100--114, 2012.

\bibitem{levy2014heuristics}
D.~Levy, K.~Sundar, and S.~Rathinam, ``Heuristics for routing heterogeneous
  unmanned vehicles with fuel constraints,'' {\em Mathematical Problems in
  Engineering}, vol.~2014, 2014.

\bibitem{lunz2012influence}
B.~Lunz, Z.~Yan, J.~B. Gerschler, and D.~U. Sauer, ``Influence of plug-in
  hybrid electric vehicle charging strategies on charging and battery
  degradation costs,'' {\em Energy Policy}, vol.~46, pp.~511--519, 2012.

\bibitem{campbell2008routing}
A.~M. Campbell, D.~Vandenbussche, and W.~Hermann, ``Routing for relief
  efforts,'' {\em Transportation Science}, vol.~42, no.~2, pp.~127--145, 2008.

\bibitem{torabbeigi2020drone}
M.~Torabbeigi, G.~J. Lim, and S.~J. Kim, ``Drone delivery scheduling
  optimization considering payload-induced battery consumption rates,'' {\em J
  INTELL ROBOT SYST}, vol.~97, no.~3, pp.~471--487, 2020.

\bibitem{zaloga2011unmanned}
S.~J. Zaloga, {\em Unmanned aerial vehicles: robotic air warfare 1917--2007}.
\newblock Bloomsbury Publishing, 2011.

\bibitem{manyam2016path}
S.~G. Manyam, D.~W. Casbeer, and K.~Sundar, ``Path planning for cooperative
  routing of air-ground vehicles,'' in {\em 2016 American Control Conference
  (ACC)}, pp.~4630--4635, IEEE, 2016.

\bibitem{kapoutsis2017darp}
A.~C. Kapoutsis, S.~A. Chatzichristofis, and E.~B. Kosmatopoulos, ``Darp:
  divide areas algorithm for optimal multi-robot coverage path planning,'' {\em
  J INTELL ROBOT SYST}, vol.~86, no.~3-4, pp.~663--680, 2017.

\bibitem{yakici2017heuristic}
E.~Yak{\i}c{\i}, ``A heuristic approach for solving a rich min-max vehicle
  routing problem with mixed fleet and mixed demand,'' {\em CAIE}, vol.~109,
  pp.~288--294, 2017.

\bibitem{sze2017cumulative}
J.~F. Sze, S.~Salhi, and N.~Wassan, ``The cumulative capacitated vehicle
  routing problem with min-sum and min-max objectives: An effective
  hybridisation of adaptive variable neighbourhood search and large
  neighbourhood search,'' {\em TRANSPORT RES B-METH}, vol.~101, pp.~162--184,
  2017.

\bibitem{narasimha2013ant}
K.~V. Narasimha, E.~Kivelevitch, B.~Sharma, and M.~Kumar, ``An ant colony
  optimization technique for solving min--max multi-depot vehicle routing
  problem,'' {\em Swarm and Evolutionary Computation}, vol.~13, pp.~63--73,
  2013.

\bibitem{wang2016min}
X.~Wang, B.~Golden, E.~Wasil, and R.~Zhang, ``The min--max split delivery
  multi-depot vehicle routing problem with minimum service time requirement,''
  {\em CAIE}, vol.~71, pp.~110--126, 2016.

\bibitem{carlsson2009solving}
J.~Carlsson, D.~Ge, A.~Subramaniam, A.~Wu, and Y.~Ye, ``Solving min-max
  multi-depot vehicle routing problem,'' {\em Lectures on global optimization},
  vol.~55, pp.~31--46, 2009.

\bibitem{zhang2018meta}
S.~Zhang, Y.~Gajpal, and S.~Appadoo, ``A meta-heuristic for capacitated green
  vehicle routing problem,'' {\em Ann. Oper. Res.}, vol.~269, no.~1-2,
  pp.~753--771, 2018.

\bibitem{sundar2017path}
K.~Sundar, S.~Venkatachalam, and S.~G. Manyam, ``Path planning for multiple
  heterogeneous unmanned vehicles with uncertain service times,'' in {\em 2017
  International Conference on Unmanned Aircraft Systems (ICUAS)}, pp.~480--487,
  IEEE, 2017.

\bibitem{venkatachalam2019two}
S.~Venkatachalam, M.~Bansal, J.~M. Smereka, and J.~Lee, ``Two-stage stochastic
  programming approach for path planning problems under travel time and
  availability uncertainties,'' {\em arXiv preprint arXiv:1910.04251}, 2019.

\bibitem{gurobi}
L.~Gurobi~Optimization, ``Gurobi optimizer reference manual,'' 2020.

\bibitem{helsgaun2000effective}
K.~Helsgaun, ``An effective implementation of the lin--kernighan traveling
  salesman heuristic,'' {\em Eur. J. Oper. Res.}, vol.~126, no.~1,
  pp.~106--130, 2000.

\bibitem{khuller2007fill}
S.~Khuller, A.~Malekian, and J.~Mestre, ``To fill or not to fill: the gas
  station problem,'' in {\em European Symposium on Algorithms}, pp.~534--545,
  Springer, 2007.

\bibitem{dijkstra1959note}
E.~W. Dijkstra {\em et~al.}, ``A note on two problems in connexion with
  graphs,'' {\em Numerische mathematik}, vol.~1, no.~1, pp.~269--271, 1959.

\bibitem{hansen2010variable}
P.~Hansen, N.~Mladenovi{\'c}, and J.~A.~M. P{\'e}rez, ``Variable neighbourhood
  search: methods and applications,'' {\em Ann. Oper. Res.}, vol.~175, no.~1,
  pp.~367--407, 2010.

\bibitem{davis1985applying}
L.~Davis, ``Applying adaptive algorithms to epistatic domains.,'' in {\em
  IJCAI}, vol.~85, pp.~162--164, 1985.

\bibitem{gen1996genetic}
M.~Gen and R.~Cheng, {\em Genetic Algorithms and Manufacturing Systems Design}.
\newblock John Wiley \& Sons, Inc., 1996.

\bibitem{schiffer2018adaptive}
M.~Schiffer and G.~Walther, ``An adaptive large neighborhood search for the
  location-routing problem with intra-route facilities,'' {\em Transportation
  Science}, vol.~52, no.~2, pp.~331--352, 2018.

\bibitem{augerat1995computational}
P.~Augerat, J.~M. Belenguer, E.~Benavent, A.~Corber{\'a}n, D.~Naddef, and
  G.~Rinaldi, {\em Computational results with a branch and cut code for the
  capacitated vehicle routing problem}, vol.~34.
\newblock IMAG, 1995.

\bibitem{taguchi1986introduction}
G.~Taguchi, ``Introduction to quality engineering: designing quality into
  products and processes,'' tech. rep., 1986.

\bibitem{Minitab}
``Minitab 19 statistical software (2020). [computer software]. state college,
  pa: Minitab, inc,'' 2020.

\end{thebibliography}
	
\end{document}